\declaretheorem[name=Remark]{remark}
\declaretheorem[name=Assumption,refname=Asm.]{assumption}
\definecolor{dgreen}{rgb}{0.00,0.49,0.00}
\definecolor{dblue}{rgb}{0,0.08,0.75}
\crefname{assumption}{Assumption}{Assumptions}
\crefname{equation}{}{}
\crefname{figure}{Fig.}{Figs.}
\crefname{table}{Tab.}{Tabs.}
\crefname{section}{Sec.}{Sec.}
\crefname{theorem}{Thm.}{Thm.}
\crefname{lemma}{Lemma}{Lemmas}
\crefname{corollary}{Cor.}{Cor.}
\crefname{example}{Example}{Examples}
\crefname{remark}{Remark}{Remarks}
\crefname{algorithm}{Alg.}{Algorightms}
\crefname{appendix}{Appendix}{Appendices}
\crefname{subappendix}{Appendix}{Appendices}
\crefname{subsubappendix}{Appendix}{Appendices}
\newcommand{\spmlTitle}{Structured Prediction for Conditional Meta-Learning}
\newcommand{\spml}{TASML}
\newcommand{\spmlong}{Task-adaptive Structured Meta-Learning}
\newcommand{\lsmetal}{{\textsc{LS~Meta-Learn}}}
\newcommand{\imgnet}{\textsc{ImageNet}}
\newcommand{\mimg}{\textit{mini}\imgnet{}}
\newcommand{\timg}{\textit{tiered}\imgnet{}}
\newcommand{\cifar}{\textsc{CIFAR-FS}}
\newcommand{\leo}{{LEO}}
\newcommand{\sploss}{{\bigtriangleup}}
\newcommand{\msf}[1]{\mathsf{#1}}
\newcommand{\mbf}[1]{\mathbf{#1}}
\newcommand{\R}{{\mathbb{R}}}
\newcommand{\N}{{\mathbb{N}}}
\newcommand{\EE}{\mathbb{E}}
\newcommand{\Lagr}{\mathcal{L}}
\newcommand{\argmin}{\operatornamewithlimits{argmin}}
\newcommand{\X}{{\mathcal{X}}}
\newcommand{\Y}{{\mathcal{Y}}}
\newcommand{\Z}{{\mathcal{Z}}}
\newcommand{\F}{{\mathcal{F}}}
\newcommand{\D}{{\mathcal{D}}}
\newcommand{\E}{{\mathcal{E}}}
\newcommand{\hh}{{\mathcal{H}}}
\newcommand{\G}{{\mathcal{G}}}
\newcommand{\hs}{{\textrm{HS}}}
\newcommand{\metaD}{{S}}
\newcommand{\alg}{\textrm{Alg}}
\newcommand{\jointAlg}{T}
\newcommand{\condmetapar}{\tau}
\renewcommand{\paragraph}[1]{~\newline\noindent{\bfseries #1.}}
\newcommand{\gstar}{{g^*}}
\newcommand{\eqals}[1]{\begin{align*}#1\end{align*}}
\newcommand{\eqal}[1]{\begin{align}#1\end{align}}
\providecommand{\scal}[2]{\left\langle{#1},{#2}\right\rangle}
\providecommand{\nor}[1]{\left\|{#1}\right\|}
\title{\LARGE\bf\spmlTitle{}\vspace{1em}}
\author{ Ruohan Wang$^{1}$ \\ {\footnotesize\em r.wang16@imperial.ac.uk} \and  Yiannis Demiris$^{1}$ \\ {\footnotesize\em y.demiris@imperial.ac.uk} \\ \and  Carlo Ciliberto$^{1}$ \\ {\footnotesize\em c.ciliberto@imperial.ac.uk} \\ $ $ \\  }
\begin{document}

\maketitle

\begin{abstract}
\noindent \footnotetext[1]{Electrial and Electronic Engineering Department, Imperial College London, SW7 2BT, London, United Kingdom.}The goal of optimization-based meta-learning is to find a single initialization shared across a distribution of tasks to speed up the process of learning new tasks.
\textit{Conditional} meta-learning seeks task-specific initialization to better capture complex task distributions and improve performance.
However, many existing conditional methods are difficult to generalize and lack theoretical guarantees.
In this work, we propose a new perspective on conditional meta-learning via structured prediction.
We derive \textit{task-adaptive structured meta-learning} (\spml{}), a principled framework that yields task-specific objective functions by weighing meta-training data on target tasks.
Our non-parametric approach is model-agnostic and can be combined with existing meta-learning methods to achieve conditioning.
Empirically, we show that \spml{} improves the performance of existing meta-learning models, and outperforms the state-of-the-art on benchmark datasets.
\end{abstract}

\section{Introduction}
\label{sec:intro}
State-of-the-art learning algorithms such as neural networks typically require vast amounts of data to generalize well. This is problematic for applications with limited data availability (e.g. drug discovery~\cite{altae2017low}). Meta-learning is often employed to tackle the lack of training data~\cite{finn2017model, vinyals2016matching, ravi2016optimization}. It is designed to learn data-driven inductive bias to speed up learning on novel tasks~\cite{thrun1996learning,vilalta2002perspective}, with many application settings such as learning-to-optimize~\cite{li2016learning} and few-shot learning~\cite{fei2006one, lake2011one}. Meta-learning methods could be broadly categorized into metric-learning~\cite{vinyals2016matching, snell2017prototypical, oreshkin2018tadam}, model-based~\cite{li2016learning, hochreiter2001learning, ha2016hypernetworks}, and optimization-based~\cite{finn2017model, rusu2018meta, nichol2018first}.

We focus on optimization-based approaches, which cast meta-learning as a bi-level optimization~\cite{finn2017model, rajeswaran2019meta, antoniou2018train}. At the single-task level, an ``inner'' algorithm performs task-specific optimization starting from a set of meta-parameters shared across all tasks. At the ``outer'' level, a meta learner accrues experiences from observed tasks to learn the aforementioned meta-parameters. These methods seek to learn a single initialization of meta-parameters that can be effectively adapted to all tasks. Relying on the shared initialization is challenging for complex (e.g. multi-modal) task distributions, since different tasks may require a substantially different initialization, given the same adaptation routine. This makes it infeasible to find a common meta-parameters for all tasks. Several recent works~\cite{vuorio2019multimodal,yao2019hierarchically,rusu2018meta,jerfel2019reconciling,cai2020weighted, jiang2018learning, lee2019learning, wang2019tafe,denevi2020advantage} address the issue by conditioning such parameters on target tasks, and demonstrate consistent improvements over unconditional meta-learning. However, existing methods often lack theoretical guarantees on generalization performance and implements specific conditioning principles with customized networks, which may be difficult to generalize across different application settings.

In this paper, we offer a novel perspective on \textit{conditional} meta-learning based on structured prediction~\cite{bakir2007predicting}. This enables us to propose {\itshape \spmlong{} (\spml)} -- a general framework for conditional meta-learning -- by interpreting the inner algorithm as the structured output to be predicted, conditioned on target tasks. We derive a principled estimator that minimizes task-specific meta-learning objectives, which weigh known training tasks based on their similarities with the target task. The proposed framework is non-parametric and thus requires access to training data at test time for the task-specific objectives. We introduce an efficient algorithm for \spml{} to mitigate the additional computational costs associated with optimizing these task-specific objectives. Intuitively, the proposed framework learns a target task by explicitly recalling only the most relevant tasks from past observations, to better capture the local task distribution for improved generalization. The relevance of previously observed tasks with respect to the target one is measured by a structured prediction approach from \cite{ciliberto2019localized}. \spml{} is model-agnostic and can easily adapt existing meta-learning methods to achieve conditional meta-learning.

We empirically evaluate \spml{} on two competitive few-shot classification benchmarks: {\itshape mini-} and \timg{}. We show that \spml{} outperforms state-of-the-art methods, and improves the accuracy of existing meta-learning algorithms by adapting them into their respective conditional variants. We also investigate \spml{}'s trade-off between computational efficiency and accuracy improvement, showing that the proposed method achieves good efficiency in learning new tasks.

Our main contributions include: $i)$ a new perspective on conditional meta-learning based on structured prediction, $ii)$ \spml{}, a conditional meta-learning framework that generalizes existing meta-learning methods, $iii)$ a practical and efficient algorithm under the proposed framework, and $iv)$ a thorough evaluation of \spml{} on benchmarks, outperforming state-of-the-art methods.

\section{Background and Notation}
\label{sec:bg}
For clarity, in the following we focus on meta-learning for supervised learning. However, the discussion below and our proposed approach also apply to general learning settings. 

% {\bfseries Supervised learning.}
\paragraph{Supervised learning}
In supervised learning, given a probability distribution $\rho$ over two spaces $\X\times\Y$ and a loss $\ell:\Y\times\Y\to\R$ measuring prediction errors, the goal is to find $f:\X\to\Y$ that minimizes the {\itshape expected risk}
\eqal{\label{eq:supervised-risk}
    \min_{f:\X\to\Y}~ \E(f) \quad ~\textrm{with}~ \quad \E(f) ~=~ \EE_\rho~ \ell(f(x),y),
}
with $(x,y)$ sampled from $\rho$. A finite training set $D = (x_j,y_j)_{j=1}^m$ of \textit{i.i.d.} samples from $\rho$ is given. A learning algorithm typically finds $f\in\F$ within a prescribed set $\F$ of candidate models (e.g. neural networks, reproducing kernel Hilbert spaces), by performing empirical risk minimization on $D$ or adopting online strategies such as stochastic gradient methods (SGD) (see \cite{shalev2014understanding} for an in-depth view on statistical learning). A learning algorithm may thus be seen as a function $\alg:\D\to\F$ that maps an input dataset $D$ to a model $f = \alg(D)$, where $\D$ is the space of datasets on $\X\times\Y$.

\paragraph{Meta-learning}
While in supervised settings $\alg(\cdot)$ is chosen a-priori, the goal of meta-learning is to {\itshape learn a learning algorithm} suitable for a family of tasks. Thus, we consider a parametrization $\alg(\theta,\cdot):\D\to\F$ for the inner algorithm, with $\theta\in\Theta$ a space of meta-parameters and aim to solve
\eqal{\label{eq:meta-learning-risk}
\min_{\theta\in\Theta} ~\E(\theta) \qquad \textrm{with} \qquad \E(\theta) ~=~ \EE_\mu \EE_\rho ~\Lagr\big(\alg(\theta,D^{tr}),~D^{val}\big).
}
Here, $\rho$ is a task distribution sampled from a meta-distribution $\mu$, and $D^{tr}$ and $D^{val}$ are respectively training and validation sets of {\itshape i.i.d.} data points $(x,y)$ sampled from $\rho$. The task loss $\Lagr:\F\times\D\to\R$ is usually an empirical average of the prediction errors on a dataset according to an inner loss $\ell$,
\eqal{\label{eq:meta-loss}
    \Lagr(f,D) ~=~ \frac{1}{|D|}\sum_{(x,y)\in D} \ell(f(x),y),
}
with $|D|$ the cardinality of $D$. We seek the best $\theta^*$ such that applying $\alg(\theta^*,\cdot)$ on $D^{tr}$ achieves lowest generalization error on $D^{val}$, among all algorithms parametrized by $\theta\in\Theta$. In practice, we have access to only a finite meta-training set $\metaD = (D^{tr}_i,D^{val}_i)_{i=1}^N$ and the meta-parameters $\hat\theta$ are often learned by (approximately) minimizing 
\eqal{\label{eq:erm-meta-learning}
    \hat\theta = \argmin_{\theta\in\Theta}~\frac{1}{N}\sum_{i=1}^N~ \Lagr\big(\alg(\theta,D_i^{tr}), D_i^{val}\,\big).
}
Meta-learning methods address \cref{eq:erm-meta-learning} via first-order methods such as SGD, which requires access to $\nabla_\theta \alg(\theta,D)$, the (sub)gradient of the inner algorithm over its meta-parameters. For example, model-agnostic meta-learning (MAML)~\cite{finn2017model} and several related methods~(e.g. \cite{antoniou2018train,li2017meta,rajeswaran2019meta}) cast meta-learning as a bi-level optimization problem. In MAML, $\theta$ parametrizes a model $f_\theta:\X\to\Y$ (e.g. a neural network), and $\alg(\theta,D)$ performs one (or more) steps of gradient descent minimizing the empirical risk of $f_\theta$ on $D$. Formally, given a step-size $\eta>0$, 
\eqals{
f_{\theta'} = \alg(\theta,D) \quad ~\textrm{with}~ \quad \theta' = \theta - \eta~ \nabla_\theta \Lagr(f_\theta,D).
}
Inspired by MAML, meta-representation learning~\cite{zintgraf2018fast, bertinetto2018meta} performs task adaptation via gradient descent on only a subset of model parameters and considers the remaining ones as a shared representation.

\section{Conditional Meta-Learning}
\label{sec:conditional-meta-learning}
Although remarkably efficient in practice, optimization-based meta-learning typically seeks a single set of meta-parameters $\theta$ for all tasks from $\mu$. This shared initialization might be limiting for complex (e.g. multi-modal) task distributions: dissimilar tasks require substantially different initial parameters given the same task adaptation routine, making it infeasible to find a common initialization~\cite{yao2019hierarchically, vuorio2019multimodal}. To address this issue, several recent works learn to condition the initial parameters on target tasks (see \cref{fig:motivation} in \cref{sec:practical} for a pictorial illustration of this idea). For instance, \cite{vuorio2019multimodal, yao2019hierarchically, rusu2018meta} directly learn data-driven mappings from target tasks to initial parameters, and \cite{jiang2018learning} conditionally transforms feature representations based on a metric space trained to capture inter-class dependencies. Alternatively, \cite{jerfel2019reconciling} considers a mixture of hierarchical Bayesian models over the parameters of meta-learning models to condition on target tasks, while \cite{cai2020weighted} preliminarily explores task-specific initialization by optimizing weighted objective functions. However, these existing methods typically implement specific conditional principles with customized network designs, which may be difficult to generalize to different application settings. Further, they often lack theoretical guarantees.

\paragraph{Conditional Meta-learning} We formalize the conditional approaches described above as \textit{conditional meta-learning}. Specifically, we condition the meta-parameters $\theta$ on $D$ by parameterizing $\alg(\condmetapar(D),\cdot)$ with $\tau(D)\in\Theta$, a meta-parameter valued function. We cast {\itshape conditional meta-learning} as a generalization of \cref{eq:meta-learning-risk} to minimize
\eqal{\label{eq:conditional-meta-learning-risk}
\min_{\condmetapar:\D\to\Theta}~\E(\condmetapar) \qquad \textrm{with} \qquad \E(\condmetapar) = \EE_{\mu}\EE_\rho ~\Lagr\Big(\alg\big(\condmetapar(D^{tr}),~D^{tr}\big),~D^{val}\Big),
}
over a suitable space of functions $\condmetapar:\D\to\Theta$ mapping datasets $D$ to algorithms $\alg(\condmetapar(D),\cdot)$. While \cref{eq:conditional-meta-learning-risk} uses $D^{tr}$ for both the conditioning and inner algorithm, more broadly $\condmetapar$ may depend on a separate dataset $\tau(D^{con})$ of ``contextual'' information (as recently investigated also in \cite{denevi2020advantage}), similar to settings like collaborative filtering with side-information~\cite{abernethy2009new}. Note that standard (unconditional) meta-learning can be interpreted as an instance of \cref{eq:conditional-meta-learning-risk} with $\tau(D)\equiv\theta$, the constant function associating every dataset $D$ to the same meta-parameters $\theta$. Intuitively, we can expect a significant improvement from the solution $\condmetapar_*$ of \cref{eq:conditional-meta-learning-risk} compared to the solution $\theta_*$ of \cref{eq:meta-learning-risk}, since by construction $\E(\tau_*)\leq\E(\theta_*)$.

Conditional meta-learning leverages a finite number of meta-training task to learn $\tau:\D\to\Theta$. While it is possible to address this problem in a standard supervised setting, we stress that meta-learning poses unique challenges from both modeling and computational perspectives. A critical difference is the output set: in standard settings, this is usually a linear space (namely $\Y = \R^k$), for which there exist several methods to parameterize suitable spaces of hypotheses $f:\X\to\R^k$. In contrast, when the output space $\Theta$ is a complicated, ``structured'' set (e.g. space of deep learning architectures), it is less clear how to find a space of hypotheses $\condmetapar:\D\to\Theta$ and how to perform optimization over them. These settings however are precisely what the literature of {\itshape structured prediction} aims to address.

\subsection{Structured Prediction for Meta-learning}
Structured prediction methods are designed for learning problems where the output set is not a linear space but rather a set of structured objects such as strings, rankings, graphs, 3D structures \cite{bakir2007predicting,nowozin2011structured}. For conditional meta-learning, the output space is a set of inner algorithms parameterized by $\theta\in\Theta$. Directly modeling $\condmetapar:\D\to\Theta$ can be challenging. A well-established strategy in structured prediction is therefore to first learn a joint function $\jointAlg:\Theta\times\D\to\R$ that, in our setting, measures the quality of a model $\theta$ for a specific dataset $D$. The structured prediction estimator $\condmetapar$ is thus defined as the function choosing the optimal model parameters $\tau(D)\in\Theta$ given the input dataset $D$
\eqal{\label{eq:general-structured-prediction}
    \condmetapar(D) = \argmin_{\theta\in\Theta}~\jointAlg(\theta,D).
}
Within the structured prediction literature, several strategies have been proposed to model and learn the joint function $T$, such as SVMStruct~\cite{tsochantaridis2005} and Maximum Margin Markov Networks~\cite{taskar2004max}. However, most methods have been designed to deal with output spaces $\Theta$ that are discrete or finite and are therefore not suited for conditional meta-learning. To our knowledge, the only structured prediction framework capable of dealing with general output spaces (e.g. dense set $\Theta$ of network parameters) is the recent work based on the {\itshape structure encoding loss function} principle \cite{ciliberto2019localized,ciliberto2020general}. This approach also enjoys strong theoretical guarantees including consistency and learning rates. We propose to apply such a method to conditional meta-learning and then study its generalization properties.

\paragraph{Task-adaptive Structured Meta-Learning}
To apply \cite{ciliberto2019localized}, we assume access to a reproducing kernel \cite{aronszajn1950theory} $k:\D\times\D\to\R$ on the space of datasets (see \cref{eq:dataset-signature} in \cref{sec:implementation} for an example). Given a meta-training set $\metaD = (D^{tr}_i,D^{val}_i)_{i=1}^N$ and a new task $D$, the structured prediction estimator is
\eqal{\label{eq:estimator}
\begin{split}
& \tau(D) = \argmin_{\theta \in \Theta}~ \sum_{i=1}^N ~\alpha_i(D)~\Lagr\big(\alg(\theta,D_i^{tr}),~D_i^{val}\,\big)\\
& ~~\textrm{with} \quad~~ \alpha(D) = (\mbf{K}+\lambda I)^{-1}v(D) \in \mathbb{R}^N,
\end{split}
}
where $\lambda > 0$ is a regularizer, $\alpha_i(D)$ denotes the $i$-th entry of the vector $\alpha(D)$ while $\mbf{K}\in \mathbb{R}^{N\times N}$ and $v(D)\in\R^N$ are the kernel matrix and evaluation vector with entries $\mbf{K}_{i,j} = k(D^{tr}_i, D^{tr}_j)$ and $v(D)_{i} = k(D^{tr}_i, D)$, respectively. We note that \cref{eq:estimator} is an instance of \cref{eq:general-structured-prediction}, where the joint functional $\jointAlg$ is modelled according to \cite{ciliberto2019localized} and learned on the meta-training set $\metaD$. The resulting approach is a non-parametric method for conditional meta-learning, which accesses training tasks for solving \cref{eq:estimator}.

We refer to the estimator in \cref{eq:estimator} as {\itshape \spmlong{} (\spml{})}. In this formulation, we seek $\theta$ to minimize a weighted meta-learning objective, where the $\alpha:\D\to\R^N$ can be interpreted as a ``scoring'' function that measures the relevance of known training tasks to target tasks. The structured prediction process is hence divided into two distinct phases: $i)$ a {\itshape learning} phase for estimating the scoring function $\alpha$ and $ii)$ a {\itshape prediction} phase to obtain $\condmetapar(D)$ by solving \cref{eq:estimator} on $D$. The following remark draws a connection between \spml{} and unconditional meta-learning methods. 
\begin{remark}[Connection with standard meta-learning]\label{rem:connection-with-maml}
The objective in \cref{eq:estimator} recovers the empirical risk minimization for meta-learning introduced in \cref{eq:erm-meta-learning} if we set $\alpha_i(D) \equiv 1$. Hence, methods from \cref{sec:bg} -- such as MAML -- can be interpreted as conditional meta-learning algorithms that assume all tasks being equally related to one other.
\end{remark}
\cref{rem:connection-with-maml} suggests that \spml{} is compatible with most existing (unconditional) meta-learning methods in the form of \cref{eq:erm-meta-learning} (or its stochastic variants). Thus, \spml{} can leverage existing algorithms, including their architectures and optimization routines, to solve the weighted meta-learning problem in \cref{eq:estimator}. In \cref{sec:exp}, we show empirically that \spml{} improves the generalization performance of three meta-learning algorithms by adopting the proposed structured prediction perspective. Additionally, in \cref{sec:exp_mtl} we discuss the potential relations between \cref{eq:estimator} and recent multi-task learning (MTL) strategies that rely on task-weighing \cite{chen2018gradnorm, kendall2018multi}.

\paragraph{Theoretical Properties}
Thanks to adopting a structured prediction perspective, we can characterize \spml{}'s learning properties. In particluar, the following result provides non-asymptotic excess risk bounds for our estimator that indicate how fast we can expect the prediction error of $\condmetapar$ to decrease as the number $N$ of meta-training tasks grows.

\begin{restatable}[{\itshape Informal} -- Learning Rates for \spml{}]{theorem}{TRatesInformal}\label{thm:rates-informal} Let $\metaD = (D^{tr}_i,D^{val}_i)_{i=1}^N$ be sampled from a meta-distribution $\mu$ and  $\condmetapar_N$ the estimator in \cref{eq:estimator} trained with $\lambda = N^{-1/2}$  on $\metaD$. Then, with high probability with respect to $\mu$,
\eqal{
    \E(\condmetapar_N) ~- \inf_{\condmetapar:\D\to\Theta}~\E(\condmetapar)~\leq O(N^{-1/4}).
}
\end{restatable}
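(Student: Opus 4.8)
The plan is to derive the bound by treating \spml{} as an instance of the localized structured prediction estimator of \cite{ciliberto2019localized}, transferring its known consistency and learning-rate guarantees to our setting, with the task loss $\Lagr(\alg(\theta,D^{tr}),D^{val})$ playing the role of the "structure encoding loss function." The first step is to cast \cref{eq:conditional-meta-learning-risk} into the abstract structured prediction template: the input space is $\D$ (datasets), the output space is $\Theta$, and the loss is $L(\theta, D) = \EE_\rho \Lagr(\alg(\theta, D^{tr}), D^{val})$ conditioned on $D = D^{tr}$. I would verify that this loss admits the bilinear "structure encoding" decomposition required by \cite{ciliberto2019localized}, i.e. that there is a separable Hilbert space $\hh$, a feature map $\psi:\D\to\hh$, and a bounded continuous map $V:\Theta\to\hh$ with $L(\theta,D) = \scal{\psi(D)}{V(\theta)}_\hh$. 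For losses of the form \cref{eq:meta-loss} this typically follows from mild regularity (continuity, boundedness of $\ell$ and of the inner algorithm $\alg(\theta,\cdot)$ over the relevant domains), possibly after passing to the expectation over $\rho$ so that the loss is a genuine function of the underlying task distribution rather than the sampled datasets.

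The second step is to invoke the comparison inequality and the generalization bound from \cite{ciliberto2019localized}. Their framework shows that (i) the estimator $\condmetapar_N(D) = \argmin_{\theta} \sum_i \alpha_i(D) L(\theta, D_i)$ defined via the kernel ridge weights $\alpha(D) = (\mbf K + \lambda I)^{-1} v(D)$ is Fisher-consistent, and (ii) its excess risk is controlled by the excess risk of the associated vector-valued kernel ridge regression estimating the conditional mean embedding $D \mapsto \EE[\psi(D')\mid D]$ in $\hh$. Standard kernel ridge regression theory then gives, under a boundedness/source condition that holds automatically in the worst case, an $\hh$-valued estimation error of order $O(N^{-1/4})$ when $\lambda = N^{-1/2}$. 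The comparison inequality (typically of the square-root-Lipschitz type, $\E(\condmetapar_N) - \E^* \le c\sqrt{\|\hat h_N - h^*\|}$ or a direct Lipschitz bound depending on the loss geometry) then transfers this to the $O(N^{-1/4})$ bound on $\E(\condmetapar_N) - \inf_\tau \E(\tau)$ stated in the theorem. The "high probability with respect to $\mu$" qualifier comes directly from the concentration step in the kernel ridge regression analysis.

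The main obstacle, and the step requiring the most care, is the reduction in the first paragraph: one must check that the meta-learning task loss genuinely fits the structure encoding framework, in particular that $V(\theta)$ is well-defined and bounded on all of $\Theta$ (which may be a noncompact space of network parameters) and that the inner algorithm $\alg(\theta,\cdot)$ interacts with this decomposition measurably and continuously. A secondary subtlety is that the argmin in \cref{eq:estimator} need not be unique or attained when $\Theta$ is large and $\alg$ is nonconvex; the honest route is either to assume a measurable selection of near-minimizers (which only perturbs the bound by a controllable additive term) or to restrict $\Theta$ to a compact set, and to state these as the hypotheses that the informal theorem suppresses. Once the abstract hypotheses of \cite{ciliberto2019localized,ciliberto2020general} are verified, the rate itself is a black-box consequence and no new calculation is needed; the formal version of the theorem in the appendix would spell out exactly which regularity assumptions on $\ell$, $\alg$, $k$, and $\Theta$ are needed to license that black-box application.
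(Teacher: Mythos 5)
Your proposal follows essentially the same route as the paper's proof: the appendix casts conditional meta-learning as structured prediction with the conditioned loss $\sploss(\theta,D^{val}\,|\,D^{tr}) = \Lagr(\alg(\theta,D^{tr}),D^{val})$, obtains the bilinear SELF decomposition via a smoothness assumption and Sobolev-space RKHS machinery (rather than mere continuity/boundedness, and on the pointwise loss rather than its expectation over $D^{val}$), applies the comparison inequality of \cite{ciliberto2019localized} to reduce the excess risk to the error of a vector-valued kernel ridge regression estimating the conditional mean embedding, and invokes the $O(N^{-1/4})$ rate of \cite{ciliberto2020general} at $\lambda = N^{-1/2}$. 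The technical hypotheses you flag as suppressed by the informal statement (compactness of $\Theta$, regularity of $\ell$ and $\alg$, and a source-type condition on the conditional mean embedding) are exactly the ones the formal theorem in the appendix assumes.
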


\cref{thm:rates-informal} shows that the proposed algorithm asymptotically yields the \textit{best} possible task-conditional estimator for the family of tasks identified by $\mu$, over the novel samples from validation sets. The proof of \cref{thm:rates-informal} leverages recent results from the literature on structured prediction \cite{luise2018differential,rudi2018manifold}, combined with standard regularity assumptions on the meta-distribution $\mu$. See \cref{app:theory} for a proof and further discussion on the relation between \spml{} and general structured prediction.

\section{A Practical Algorithm for \spml{}}
\label{sec:practical}

The proposed \spml{} estimator $\tau:\D\to\Theta$ offers a principled approach to conditional meta-learning. However, task-specific objectives incur additional computational cost compared to unconditional meta-learning models like MAML, since we have to repeatedly solve \cref{eq:estimator} for each target task $D$, in particular when the number $N$ of meta-training tasks is large. We hence introduce several adjustments to \spml{} that yield a significant speed-up in practice, without sacrificing overall accuracy.

\paragraph{Initialization by Meta-Learning}
Following the observation in \cref{rem:connection-with-maml}, we propose to learn an ``agnostic'' $\hat\theta\in\Theta$ as model initialization before applying \spml{}. Specifically, we obtain $\hat\theta$ by applying a standard (unconditional) meta-learning method solving \cref{eq:erm-meta-learning}. We then initialize the inner algorithm with $\hat\theta$, followed by minimizing \cref{eq:estimator} over the meta-parameters. In practice, the learned initialization significantly speeds up the convergence in \cref{eq:estimator}. We stress that the proposed initialization is optional: directly applying \spml{} to each task using random initialization obtains similar performance, although it takes more training iterations to achieve convergence (see \cref{app:no-init}).

\paragraph{Top-$M$ Filtering}
In \cref{eq:estimator}, the weights $\alpha_i(D)$ measure the relevance of the $i$-th meta-training task $D_i^{tr}$ to the target task $D$. We propose to keep only the top-$M$ values from $\alpha(D)$, with $M$ a hyperparameter, and set the others to zero. This filtering reduces the computational cost of \cref{eq:estimator}, by constraining training to only tasks $D_i^{tr}$ most relevant to $D$, The filtering process has little impacts on the final performance, since we empirically observed that only a small percentage of tasks have large $\alpha(D)$, given a large number $N$ of training tasks (e.g. $N > 10^5$). In our experiments we chose $M$ to be $\sim1$\% of $N$, since larger values did not provide significant improvements in accuracy (see \cref{app:sp_top_m} for further ablation). We observe that \spml{}'s explicit dependence on meta-training tasks resembles meta-learning methods with external memory module~\cite{santoro2016meta}, The proposed filtering process in turn resembles memory access rules but requires no learning. For each target task, only a small number of training tasks are accessed for adaptation, limiting the overall memory requirements.

\paragraph{Task Adaptation} 
The output $\tau(D)$ in \cref{eq:estimator} depends on the target task $D$ only via the task weights $\alpha(D)$. We propose to directly optimize $D$ with an additional term $\Lagr(\alg(\theta, D), D)$, which encourages $\theta$ to directly exploit the training signals in target task $D$ and achieve small empirical error on it. The extra term may be interpreted as adding a ``special'' training task $(\tilde D^{tr}, \tilde D^{val}) = (D, D)$, in which the support set and query set coincides, to the meta-training set $(D^{tr}_i, D^{val}_i)_{i=1}^N$
\eqal{\label{eq:estimator-improved}
    \condmetapar(D) ~=~ \argmin_{\theta \in \Theta} ~ \sum_{i=1}^N \alpha_i(&D) ~ \Lagr\big(\alg(\theta, D^{tr}_i),~ D^{val}_i\big) ~+~ \Lagr\big(\alg(\theta, D),~ D\big),
}
By construction, the additional task offers useful training signals by regularizing the model to focus on relevant features shared among the target task and past experiences. We refer to \cref{app:estimator_ablation} for an ablation study on how \cref{eq:estimator-improved} affects test performance.

% While in principle this strategy might be prone to overfitting, the first term in \cref{eq:estimator-improved} prevents this to happen by optimizing with respect to the meta-training set.

\begin{figure}[t]
\noindent\begin{minipage}{.49\textwidth}
\begin{algorithm}[H]
   \caption{\spml{}\label{alg:spml}}
\begin{algorithmic}
    \STATE \hspace{-1em}{\bfseries Input:} meta-train set $\metaD = (D^{tr}_i, D^{val}_i)_{i=1}^N$,\\  \hspace{-1em}initial parameters $\theta$, Filter size $M$, step-size $\eta$,\\ \hspace{-1em}kernel $k$, regularizer $\lambda$ 
    
    \vspace{0.5em}
    \hspace{-1em}{\textbf{Meta-Train}}:
    
    \STATE Compute the kernel matrix $\mbf{K}\in\R^{N\times N}$ on $S$
    \STATE Let $v:\D\to\R^N$ where $v(\cdot)_{i} = k(D^{tr}_i, \cdot)$
    \STATE Let $\alpha(\cdot) = (\mbf{K}+\lambda I)^{-1}v(\cdot)$.
    % \STATE Learn $\alpha:\D\to\R^N$ in \cref{eq:estimator} from $\mbf{K}$. 
    \STATE Initialize $\theta$ with \cref{eq:erm-meta-learning} (Optional)
    
    \vspace{0.5em}
    \hspace{-1em}{\textbf{Meta-Test} with target task $D$:}
    \STATE Compute task weights $\alpha(D)\in\R^N$.
    \STATE Get top-$M$ tasks $\metaD_M\subset\metaD$ with highest $\alpha(D)$

    \vspace{0.5em}
    \WHILE{not converged}
        \STATE Sample a mini-batch $\metaD_B \subset \metaD_M$ i.i.d.
        \STATE Compute gradient $\nabla_\theta$ of \cref{eq:estimator-improved} over $\metaD_B$.
        \STATE $\theta \gets \theta - \eta ~\nabla_\theta$
    \ENDWHILE
    
    \vspace{0.5em}
\STATE \hspace{-1em}{\bfseries Return} $\theta$
\end{algorithmic}
\end{algorithm}
\end{minipage}%
\hfill
\begin{minipage}[c]{.49\textwidth}
    \centering
    \includegraphics[width=0.99\columnwidth,trim={0.2cm 0.1cm 0.1cm 0.1cm},clip]{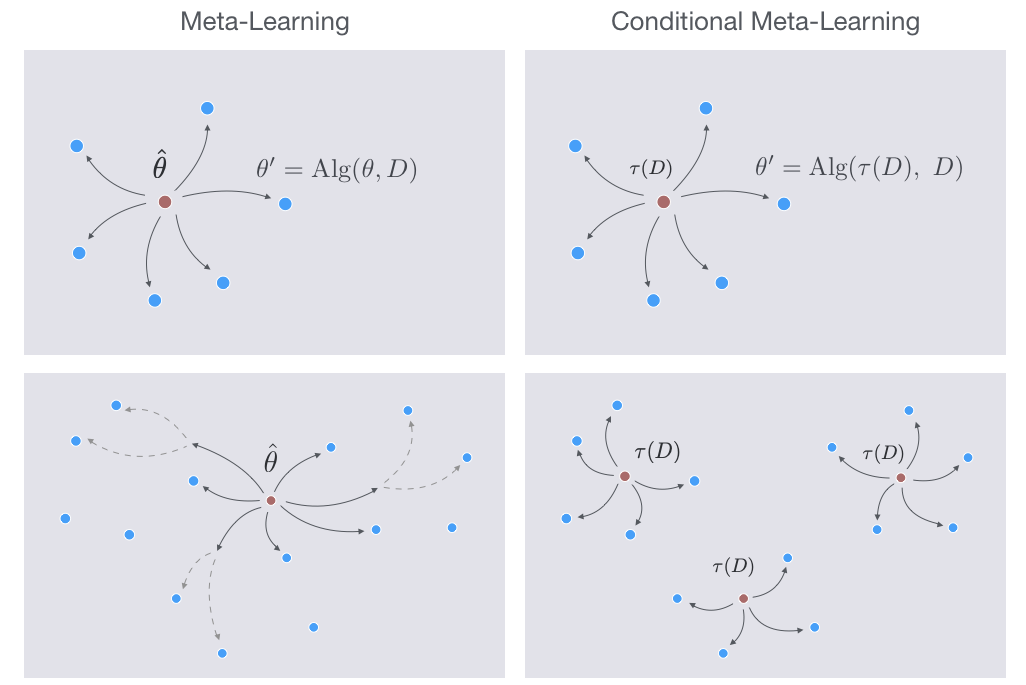}
  \captionof{figure}{Comparing unconditioned (left) and conditional (right) meta-learning. Unconditional methods using a shared initialization (red dot) may fail to adapt to dissimilar tasks (outer blue dots, bottom left). Conditional meta-learning can handle such setting via adaptive initialization (red dots, bottom right).}\label{fig:motivation}
\end{minipage}
\end{figure}

\cref{alg:spml} implements \spml{} with the proposed improvements. We initialize the model with meta-parameters $\theta$, by solving the (unconditional) meta-learning problem in \cref{eq:erm-meta-learning} over a meta-training set $\metaD = (D^{tr}_i, D^{val}_i)_{i=1}^N$. We also learn the scoring function $\alpha$ according to \cref{eq:estimator} by inverting the kernel matrix $\mbf{K}$. While this is an expensive step of up to $O(N^3)$ in complexity, sketching methods may be used to significantly speed up the inversion without loss of accuracy~\cite{rudi2017falkon,meanti2020kernel}. We stress that the scoring function $\alpha$ is learned only once for all target tasks. For any target task $D$, we compute $\alpha(D)$ and only keep the top-$M$ tasks $\metaD_M\subset\metaD$ with largest $\alpha_i(D)$. Lastly, we minimize \cref{eq:estimator-improved} over $\metaD_M$.

\subsection{Implementation Details}
\label{sec:implementation}
\paragraph{Reproducing Kernel on Datasets} 
\spml{} requires a positive definite kernel $k:\D\times\D\to\R$ to learn the score function $\alpha:\D\to\R^N$ in \cref{eq:estimator}. In this work, we take $k$ as the Gaussian kernel of the {\itshape maximum mean discrepancy (MMD)}~\cite{gretton2012kernel} of two datasets. MMD is a popular distance metric on datasets or distributions. More precisely, given two datasets $D,D' \in\D$ and a feature map $\varphi:\X\to\R^p$ on input data, we consider the kernel $k:\D\times\D\to\R$ defined as
\eqal{\label{eq:dataset-signature}
    k(D,D') = \exp\Big(-\nor{\bar\varphi(D)-\bar\varphi(D')}^2/\sigma^2\Big) \qquad \textrm{with} \qquad \bar\varphi(D)=\frac{1}{|D|}\sum_{(x,y)\in D} \varphi(x)
}
% \vskip -0.3cm
where $\sigma>0$ is a bandwidth parameter and $\bar\varphi(D)$ is the {\itshape kernel mean embedding} (or signature) of a dataset $D$ with respect to $\varphi$.

The map $\varphi$ plays a central role. It can be either fixed a priori or learned, depending on the application. A good practice when working with MMD is to use a characteristic kernel \cite{sriperumbudur2010hilbert}. In our experiments, the Gaussian kernel in \cref{eq:dataset-signature} led to best performance (see \cref{app:sp_kernel} for other options). We note that, more generally, $\varphi$ could also be taken to be a joint feature map on both input and output. We describe our choice of feature map $\varphi$ below.

\paragraph{Pre-trained Feature Map}
Expressive input representation plays a significant role in meta-learning, and may be obtained via standard supervised learning (see e.g. \cite{rodriguez2020embedding}). We choose the pre-trained feature map from \cite{rusu2018meta} for the kernel mean embedding $\bar\varphi(D)$ in \cref{eq:dataset-signature} and input representation $\varphi(x)$. 

\paragraph{Model Architecture} 
As observed in \cref{rem:connection-with-maml}, \spml{} is compatible with a wide range of existing meta-learning algorithms. In \cref{sec:sp_comp}, we report on several implementations of \cref{alg:spml}, leveraging architectures and optimization techniques from existing meta-learning methods. In addition, we introduce \lsmetal{}, a least-squares meta-learning algorithm that is highly effective in combination with \spml{}. Similar to \cite{bertinetto2018meta}, we choose an ``inner'' algorithm that solves a least-squares objective $\ell(y,y') = \nor{y-y'}^2$ in closed-form. We propose to induce the {\itshape task loss} $\Lagr$ in \cref{eq:meta-loss} by the same least-squares objective. We note that while least-squares minimization is not a standard approach in classification settings, it is theoretically principled (see e.g. \cite{bartlett2006,mroueh2012multiclass}) and provides a significant improvement to classification accuracy. See \cref{app:ls-meta} for more details.

\section{Experiments}
\label{sec:exp}
We perform experiments\footnote{\spml{} implementation is available at \url{https://github.com/RuohanW/Tasml}} on \textit{C}-way-\textit{K}-shot learning within the episodic formulation of \cite{vinyals2016matching}. In this setting, train-validation pairs $(D^{tr},D^{val})$ are sampled as described in \cref{sec:bg}. $D^{tr}$ is a $C$-class classification problem with $K$ examples per class. $D^{val}$ contains samples from the same $C$ classes for estimating model generalization and training meta-learner. We evaluate the proposed method against a wide range of meta-learning algorithms on three few-shot learning benchmarks: the \mimg{}, \timg{} and \cifar{} datasets. We consider the commonly used $5$-way-$1$-shot and $5$-way-$5$-shot settings. For training, validation and testing, we sample three separate meta-datasets $S^{tr},S^{val}$ and $S^{ts}$, each accessing a disjoint set of classes (e.g. no class in $S^{ts}$ appears in $S^{tr}$ or $S^{val}$). To ensure fair comparison, we adopt the same training and evaluation setup as \cite{rusu2018meta}. \cref{app:details-and-experiments} reports further experimental details including network specification and hyperparameter choice.

\subsection{Experiments on \imgnet{} derivatives}
\label{sec:performance}

We compared \spml{} with a representative set of baselines on classification accuracy. Unconditional methods include \textsc{MAML}~\cite{finn2017model}, \textsc{iMAML}~\cite{rajeswaran2019meta}, \textsc{Reptile}~\cite{nichol2018first}, \textsc{R2D2}~\cite{bertinetto2018meta}, \textsc{(Qiao et al. 2018)}~\cite{qiao2018few}, \textsc{CAVIA}~\cite{zintgraf2018fast}, and  \textsc{META-SGD}~\cite{li2017meta} (using \cite{rusu2018meta}'s features). Conditional methods include \textsc{(Jerfelet al 2019)}~\cite{jerfel2019reconciling}, \textsc{HSML}~\cite{yao2007early}, \textsc{MMAML}~\cite{vuorio2019multimodal}, \textsc{CAML}~\cite{jiang2018learning} and \textsc{\leo{}}~\cite{rusu2018meta}. 

We include results from our local run of \textsc{\leo{}} using the official implementation. In our experiments, we observed that \textsc{\leo{}} appeared sensitive to hyperparameter choices, and obtaining the original performance in \cite{rusu2018meta} was beyond our computational budget. For \textsc{MMAML}, we used the official implementation, since \cite{vuorio2019multimodal} did not report performance on \mimg{}. We did not compare with $\alpha$-MAML~\cite{cai2020weighted} since we did not find ImageNet results nor an official implementation. Other results are cited directly from their respective papers.

\begin{table}[t]
\caption{Classification Accuracy of meta-learning models on \mimg{} and \timg{}.}
\begin{center}
% \begin{small}
\begin{footnotesize}
\begin{sc}
\begin{tabular}{lcc|cc}
\toprule
  & \multicolumn{4}{c}{Accuracy (\%)} \\
  & \multicolumn{2}{c}{\mimg{}} & \multicolumn{2}{c}{\timg{}}\\
Unconditional Methods  & $1$-shot & $5$-shot & $1$-shot & $5$-shot\\
\midrule
MAML \cite{finn2017model} & $48.70 \pm 1.84$ &  $63.11 \pm 0.92$ & $51.67 \pm 1.81$ &  $70.30 \pm 0.8$\\
iMAML \cite{rajeswaran2019meta}& $49.30 \pm 1.88$ & - & - & -\\
Reptile \cite{nichol2018first} & $49.97 \pm 0.32$  & $65.99 \pm 0.58$ & - & -\\
R2D2 \cite{bertinetto2018meta} & $51.90 \pm 0.20$ & $68.70 \pm 0.20$ & - & -\\
CAVIA \cite{zintgraf2018fast} & $51.82 \pm 0.65$ & $65.85 \pm 0.55$ & - & -\\
(Qiao et al.) \cite{qiao2018few}  & $59.60 \pm 0.41$ & $73.74 \pm 0.19$ & - & - \\
Meta-SGD \cite{li2017meta}(\leo{} feat.)& $54.24 \pm 0.03$ & $70.86 \pm 0.04$ & $62.95 \pm 0.03$ &  $79.34 \pm 0.06$ \\
\midrule
Conditional Methods \\
\midrule
(Jerfel et al.) \cite{jerfel2019reconciling} & $51.46 \pm 1.68$ & $65.00 \pm 0.96$ & - & -\\
HSML \cite{yao2019hierarchically} & $50.38 \pm 1.85$ & - & - & -\\
MMAML \cite{vuorio2019multimodal} & $46.1 \pm 1.63$ & $59.8 \pm 1.82$ & - & -\\
CAML \cite{jiang2018learning} & $59.23 \pm 0.99$ & $72.35 \pm 0.71$ & - & -\\
\leo{}~\cite{rusu2018meta} & $61.76 \pm 0.08$ & $77.59 \pm 0.12$ & $66.33 \pm 0.05$ & $81.44 \pm 0.09$\\
\leo{} (local)~\cite{rusu2018meta} & $60.37 \pm 0.74$ & $75.36  \pm 0.44$ & $65.11 \pm 0.72$ & $79.70 \pm 0.59$ \\
\spml{} (Ours) & $\mbf{62.04 \pm 0.52}$ & $\mbf{78.22 \pm 0.47}$  & $\mbf{66.42 \pm 0.37}$ & $\mbf{82.62 \pm 0.31}$\\
\bottomrule
\end{tabular}
\end{sc}
\end{footnotesize}
% \end{small}
\end{center}
%\vskip -0.1in
\label{tab:comp}
\end{table}

\cref{tab:comp} reports the mean accuracy and standard deviation of all the methods over $50$ runs, with each run containing $200$ random test tasks. \spml{} outperforms the baselines in three out of the four settings, and achieves performance comparable to \textsc{\leo{}} in the remaining setting. We highlight the comparison between \spml{} and \textsc{\leo{}} (local), as they share the identical experiment setups. The identical setups make it easy to attribute any relative performance gains to the proposed framework. We observe that \spml{} outperforms \textsc{\leo{}} (local) in all four settings, averaging over 2\% improvements in classification accuracy. The results suggest the efficacy of the proposed method.

\subsection{Experiments on \cifar{}}
The recently proposed \cifar{} dataset~\cite{bertinetto2018meta} is a new few-shot learning benchmark, consisting of all 100 classes from CIFAR-100~\cite{cifar100}. The classes are randomly divided into 64, 16 and 20 for meta-train, meta-validation, and meta-test respectively. Each class includes 600 images of size $32\times 32$.

\begin{table}[t]
\caption{Classification Accuracy of meta-learning models on \cifar{}.}
\begin{center}
% \begin{small}
\begin{footnotesize}
\begin{sc}
\begin{tabular}{lcc}
\toprule
  & $1$-shot & $5$-shot\\
\midrule
MAML~\cite{finn2017model} & $ 58.9 \pm 1.9$ &  $71.5 \pm 1.0$\\
R2D2~\cite{bertinetto2018meta} & $65.3 \pm 0.2$ & $ 79.4 \pm 0.1$\\
ProtoNet(ResNet12 Feat.)~\cite{snell2017prototypical} & $ 72.2 \pm 0.7$ & $83.5 \pm 0.5$\\
MetaOptNet~\cite{lee2019meta} & $72.0 \pm 0.7$ & $84.2 \pm 0.5$  \\
\spml{} & $\mathbf{74.6 \pm 0.7}$ & $\mathbf{85.1 \pm 0.4}$ \\
\bottomrule
\end{tabular}
\end{sc}
\end{footnotesize}
% \end{small}
\end{center}
%\vskip -0.1in
\label{tab:comp_cifar}
\end{table}

\cref{tab:comp_cifar} compares \spml{} to a diverse set of previous methods, including \textsc{MAML}, \textsc{R2D2},  \textsc{ProtoNets}~\cite{snell2017prototypical} and \textsc{MetaOptNets}~\cite{lee2019meta}. The results clearly show that \spml{} outperforms previous methods in both settings on the \cifar{} dataset, further validating the efficacy of the proposed structured prediction approach.

\subsection{Improvements from Structured Prediction}
\label{sec:sp_comp}
The task-specific objective in \cref{eq:estimator-improved} is model-agnostic, which enables us to leverage existing meta-learning methods, including architecture and optimization routines, to implement \cref{alg:spml}. For instance, we may replace \lsmetal{} in \cref{sec:implementation} with \textsc{MAML}, leading to a new instance of structured prediction-based conditional meta-learning. 
% We refer to the combination of structured prediction with each method as \textsc{SP + Model}

\begin{table}[t]
\caption{Effects of structured prediction on \mimg{} and \timg{} benchmarks. Structured prediction (SP) improves the underlying meta-learning algorithms in all cases.}
\begin{center}
% \begin{small}
\begin{footnotesize}
\begin{sc}
\setlength{\tabcolsep}{4pt}
\begin{tabular}{lcccc}
\toprule
  & \multicolumn{4}{c}{Accuracy (\%)} \\
  & \multicolumn{2}{c}{\mimg{}} & \multicolumn{2}{c}{\timg{}}\\
  & $1$-shot & $5$-shot & $1$-shot & $5$-shot\\
\midrule
MAML (LEO Feat.) & $54.12 \pm 1.84$ &  $67.58 \pm 0.92$ & $51.28 \pm 1.81$ &  $69.80 \pm 0.84$\\
SP+MAML (LEO Feat.) & $\mbf{58.46 \pm 1.56}$ &  $\mbf{74.51 \pm 0.75}$ & $\mbf{60.89 \pm 1.64}$ &  $\mbf{78.42 \pm 0.73}$\\
\midrule
\leo{} (local) & $60.37 \pm 0.74$ & $75.36  \pm 0.44$ & $65.11 \pm 0.72$ & $79.70 \pm 0.59$ \\
SP+\leo{} (local) & $\mbf{61.46 \pm 0.69}$ & $\mbf{76.54  \pm 0.59}$ & $\mbf{66.07 \pm 0.66}$ & $\mbf{80.68 \pm 0.41}$ \\
\midrule
% \preTrainFeat & $51.37 \pm 0.39$ &  $69.91 \pm 0.21$ & $57.23 \pm 0.35$ & $78.48 \pm 0.27$\\
\lsmetal{} & $60.19 \pm 0.65$ & $76.76  \pm 0.43$ & $64.32 \pm 0.65$ & $81.43 \pm 0.55$\\
\spml{} (SP + LS Meta-Learn) & $\mbf{62.04 \pm 0.52}$ & $\mbf{78.22 \pm 0.47}$  & $\mbf{66.42 \pm 0.37}$ & $\mbf{82.62 \pm 0.31}$\\
\bottomrule
\end{tabular}
\end{sc}
\end{footnotesize}
% \end{small}
\end{center}
%\vskip -0.1in
\label{tab:sp_comp}
\end{table}

% \vskip -0.1cm
\cref{tab:sp_comp} compares the average test accuracy of \textsc{MAML}, \textsc{\leo{}} and \lsmetal{} with their conditional counterparts (denoted \textsc{SP + Method}) under the proposed structured prediction perspective. For consistency with our earlier discussion, we use \spml{} to denote \textsc{SP + \lsmetal{}}, although the framework is generally applicable to most methods. We observe that the structured prediction variants consistently outperform the original algorithms in all experiment settings. In particular, our approach improves \textsc{MAML} by the largest amount, averaging $\sim$6\% increase in test accuracy (e.g. from $48.70\%$ to $52.81\%$ for $5$-way-$1$-shot on \mimg{}). \textsc{\spml{}} averages  $\sim 1.5\%$ improvements over \lsmetal{}. We highlight that the structured prediction improves also \textsc{\leo{}} -- which is already a conditional meta-learning method -- by $\sim$1\%. This suggests that our structured prediction perspective is parallel to model-based conditional meta-learning, and might be combined to improve performance.

\subsection{Model Efficiency}
\label{sec:eff}
A potential limitation of \spml{} is the additional computational cost imposed by repeatedly performing the task-specific adaptation \cref{eq:estimator-improved}. Here we assess the trade-off between computations and accuracy improvement induced by this process. \cref{fig:sp_step} reports the classification accuracy of \spml{} when minimizing the structured prediction functional in \cref{eq:estimator-improved}, with respect to the number of training steps $J$, starting from the initialization points ($J=0$) learned via unconditional meta-learning using \lsmetal{} (refer to \cref{tab:sp_comp}). Therefore \cref{fig:sp_step} explicitly captures the additional performance improvements resulting from structured prediction. Aside from the slight decrease in performance on $1$-shot \mimg{} after $50$ steps, \spml{} shows a consistent and stable performance improvements over the $500$ steps via structured prediction. The results present a trade-off between performance improvements over unconditional meta-learning, and additional computational costs from optimizing structured prediction functional in \cref{eq:estimator-improved}.

More concretely, we quantify the actual time spent on structured prediction steps for \lsmetal{} in \cref{tab:cost}, which reports the average number of meta-gradient steps per second on a single Nvidia GTX $2080$. We note that $100$ steps of structured prediction for \lsmetal{} -- after which we observe the largest improvement in general -- take about 6 seconds to complete. In addition, \spml{} takes $\sim 0.23s$ for computing $\alpha(D)$ given a meta-train set of 30k tasks. In applications where model accuracy has the priority, the additional computational cost is a reasonable trade-off for accuracy. The adaptation cost is also amortized overall future queries in the adapted model. Lastly, we note that other conditional meta-learning methods also induce additional computational cost over unconditional formulation. For instance, \cref{tab:cost} shows that LEO performs fewer meta-gradient steps per second during training, as it incurs more computational cost for learning conditional initialization.

\subsection{Comparing Structured Prediction with Multi-Task Learning}\label{sec:exp_mtl}

We note that \spml{}'s formulation \cref{eq:estimator} is related to objective functions for multi-task learning (MTL) that also learn to weight tasks (e.g. \cite{chen2018gradnorm, kendall2018multi}). However, these strategies have entirely different design goals: MTL aims to improve performance on \textit{all} tasks, while meta-learning focuses only on the target task. This makes MTL unsuitable for meta-learning. To better visualize this fact, here we investigated whether MTL may be used as an alternative method to obtain task weighting for conditional meta-learning. We tested \cite{kendall2018multi} on \mimg{} and observed that the method significantly underperforms \spml{}, achieving $56.8 \pm 1.4$ for $1$-shot and $68.7 \pm 1.2$ for $5$-shot setting. We also observe that the performance on target tasks fluctuated widely during training with the MTL objective function, since MTL does not prioritize the performance of the target task, nor prevent negative transfer towards it.

\begin{figure}[t]
\begin{minipage}{0.49\textwidth}
\centering
\includegraphics[trim={0 0 0 1.2cm},clip, width=\textwidth]{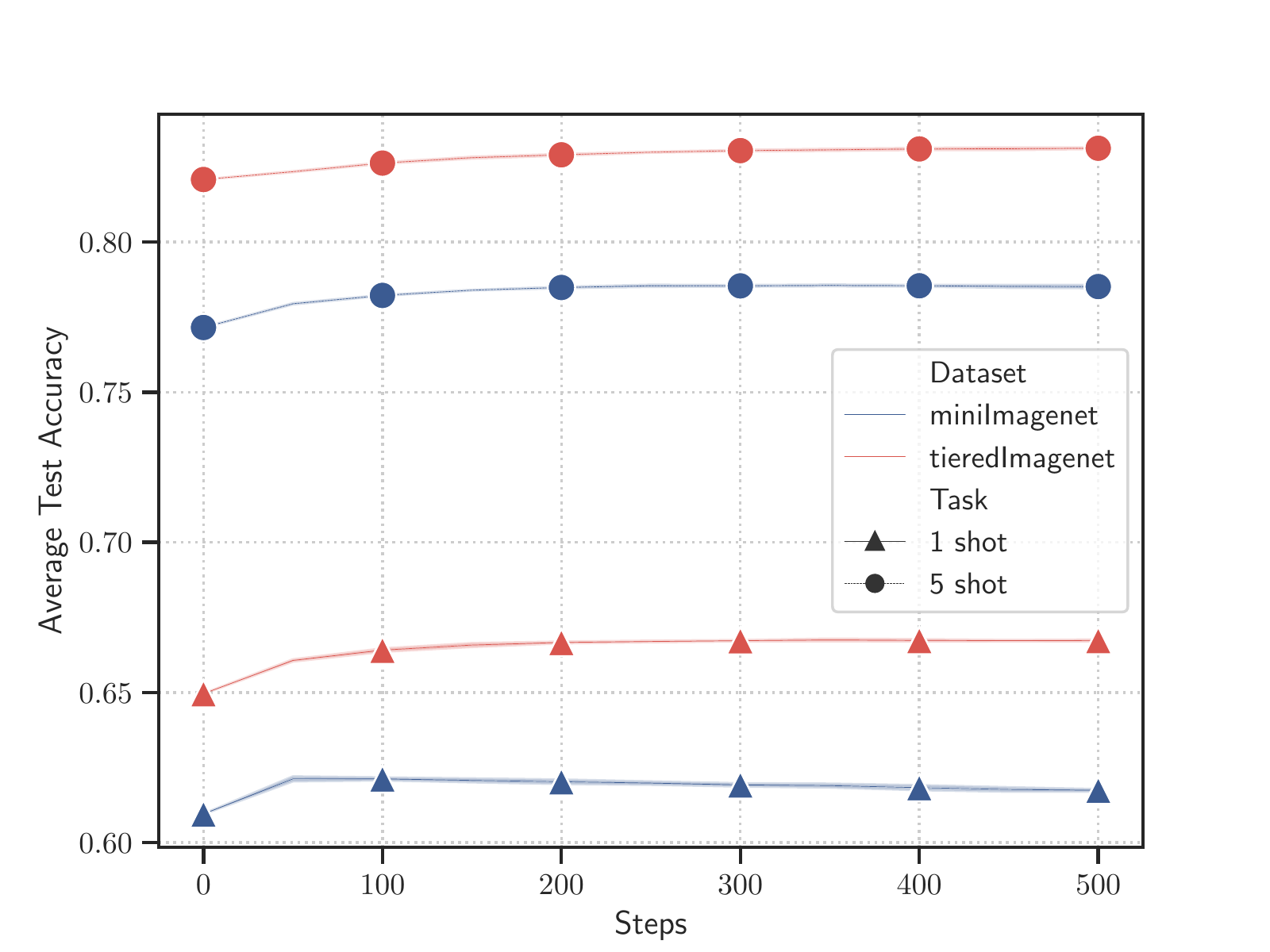}
\caption{Average test performance over 500 \spml{} structured prediction steps. }\label{fig:sp_step}
\end{minipage}
\hfill
\begin{minipage}[c]{0.49\textwidth}
\captionsetup{type=table} %% tell latex to change to table
    \begin{small}
        \begin{sc}
        \setlength{\tabcolsep}{4pt}
        \begin{tabular}{lcc}
        \toprule
            (steps/sec) & \mimg{}  & \timg{}\\
            \midrule
            \leo{} & $7.52 \pm 0.19$ & $6.95 \pm 0.47$\\
            \spml{} & $\mbf{17.82 \pm 0.27}$ & $\mbf{14.71 \pm 0.34}$\\
            \bottomrule
            \end{tabular}
        \end{sc}
    \end{small}
\caption{Meta-gradient steps per second on 5-shot learning tasks}\label{tab:cost}
\end{minipage}
\end{figure}

\section{Conclusion and Future Works}
\label{sec:con}
We proposed a novel perspective on conditional meta-learning based on structured prediction. Within this context, we presented task-adaptive structured meta-learning (\spml{}), a general framework that connects intimately to existing meta-learning methods via task-specific objective functions. The proposed method is theoretically principled, and empirical evaluations demonstrated its efficacy and effectiveness compared to the state of the art. In future work, we aim to design parametric approaches for \spml{} to improve model efficiency. We also aim to investigate novel metrics that better capture the similarity between datasets, given the key role played by the kernel function in our framework.

\section*{Broader Impact Statement}
Meta-learning aims to construct learning models capable of learning from experiences, Its intended users are thus primarily non-experts who require automated machine learning services, which may occur in a wide range of potential applications such as recommender systems and autoML. The authors do not expect the work to address or introduce any societal or ethical issues.

\section*{Acknowledgement}

The authors would like to thank the anonymous reviewers for their comments. This work was supported in part by UK DSTL/EPSRC Grant EP/P008461/1, National Science Scholarship from A{*}STAR, Singapore, the Royal Academy of Engineering Chair in Emerging Technologies to Y.D. and the Royal society of Engineering for grant SPREM RGS/R1/201149.

% {\itshape Authors are required to include a statement of the broader impact of their work, including its ethical
% aspects and future societal consequences. The authors should discuss both positive and negative outcomes,
% if any. For instance, authors should discuss a) who may benefit from this research, b) who may be
% put at disadvantage from this research, c) what are the consequences of failure of the system, and d)
% whether the task/method leverages biases in the data. If authors believe this is not applicable to them,
% authors can simply state this.
% Use unnumbered first level headings for this section, which should go at the end of the paper.}

{
\bibliographystyle{plain}
\bibliography{biblio}
}

\newpage 

\appendix

% Not Abbreviated
\crefname{assumption}{Assumption}{Assumptions}
\crefname{equation}{}{}
\Crefname{equation}{Eq.}{Eqs.}
\crefname{figure}{Figure}{Figures}
\crefname{table}{Table}{Tables}
\crefname{section}{Section}{Sections}
\crefname{theorem}{Theorem}{Theorems}
\crefname{proposition}{Proposition}{Propositions}
\crefname{fact}{Fact}{Facts}
\crefname{lemma}{Lemma}{Lemmas}
\crefname{corollary}{Corollary}{Corollaries}
\crefname{example}{Example}{Examples}
\crefname{remark}{Remark}{Remarks}
\crefname{algorithm}{Algorithm}{Algorithms}
\crefname{enumi}{}{}

\crefname{appendix}{Appendix}{Appendices}

\numberwithin{equation}{section}
\numberwithin{lemma}{section}
\numberwithin{proposition}{section}
\numberwithin{theorem}{section}
\numberwithin{corollary}{section}
\numberwithin{definition}{section}
\numberwithin{algorithm}{section}
% \numberwithin{fact}{section}
\numberwithin{remark}{section}

\onecolumn

\section*{\LARGE Supplementary Material: \spmlTitle{}}

The Appendix is organized in two main parts:
\begin{itemize}
    \item \cref{app:theory} proves the formal version of \cref{thm:rates-informal} and provides additional details on the connection between structured prediction and conditional meta-learning investigated in this work.
    \item \cref{app:details-and-experiments} provides additional details on the model hyperparameters and additional experimental evaluation.
    \item \cref{sec:additional-ablation} provides additional ablation studies.
\end{itemize}

\section{Structured Prediction for Conditional Meta-learning}
\label{app:theory}

We first recall the general formulation of the structured prediction approach in \cite{ciliberto2019localized}, followed by showing how the conditional meta-learning problem introduced in \cref{sec:conditional-meta-learning} can be cast within this setting. 

\subsection{General Structured Prediction}
\label{app:general-structured-prediction}

In this section, we borrow from the notation of \cite{ciliberto2019localized}. Consider $\X,\Y$ and $\Z$ three spaces, respectively the {\itshape input}, {\itshape label} and {\itshape output} sets of our problem. We make a distinction between label and output space since conditional meta-learning can be formulated within the setting described below by taking $\Z$ to be the meta-parameter space $\Theta$ and $\Y$ the space $\D$ of datasets. 

Structured prediction methods address supervised learning problems where the goal is to learn a function $f:\X\to\Z$ taking values in a ``structured'' space $\Z$. Here, the term structured is general and essentially encompasses output sets of strings, graphs, points on a manifold, probability distributions, etc. Formally, these are all spaces that are not linear or do not have a canonical embedding into a linear space $\R^k$. 

As we will discuss in the following, the lack of linearity on $\Z$ poses concrete challenges on modeling and optimization. In contrast, formally, the target learning problem is cast as a standard supervised learning problem of the form \cref{eq:supervised-risk}. More precisely, given a distribution $\rho$ on $\X\times\Y$
\eqal{\label{eq:struct-pred-risk}
    \min_{f:\X\to\Z} ~\E(f) \qquad \textrm{with} \qquad \E(f) = \int ~\sploss(f(x),y|x)~d\rho(x,y),
}
where $\sploss:\Z\times\Y\times\X\to\R$ is a loss function measuring prediction errors. Note that $\sploss(z,y|x)$ does not only compare the predicted output $z\in\Z$ with the label $y\in\Y$, but does that also depending or {\itshape conditioned} on the input $x\in\X$ (hence the notation $\sploss(z,y|x)$ rather than $\sploss(z,y,x)$). These conditioned loss functions were originally introduced to account for structured prediction settings where prediction errors depend also on properties of the input. For instance in ranking problems or in sequence-to-sequence translation settings, as observed in \cite{ciliberto2019localized}. 

\paragraph{Structured Prediction Algorithm\footnote{We note that in the original work, the authors considered a further parametrization of the loss $\sploss$ leveraging the concept of locality and parts. This led to the derivation of a more general (and involved) characterization of the estimator $\hat f$. However, for the setting considered in this work we consider a simplified scenario (see \cref{app:connection-sp-metalearning} below) and we can therefore restrict to the case where the loss does not assume a factorization into parts, namely the set of parts $P$ corresponds to $P = \{1\}$ the singleton, leading to the structured prediction estimator \cref{eq:original-sp-estimator}.}}
Given a finite number $n\in\N$ of points $(x_i,y_i)_{i=1}^n$ independently sampled from $\rho$, the structured prediction algorithm proposed in \cite{ciliberto2019localized} is an estimator $\hat f:\X\to\Z$ such that, for every $x\in\X$
\eqal{\label{eq:original-sp-estimator}
    \hat f(x) ~=~ \argmin_{z\in\Z}~ \sum_{i=1}^n \alpha_i(x)~\sploss(z,y_i|x_i).    
}
where, given a reproducing kernel $k:\X\times\X\to\R$, the weighs $\alpha$ are obtained as 
\eqal{\label{eq:origina-sp-alphas}
    \alpha(x) = (\alpha_1(x),\dots,\alpha_n(x))^\top \in \R^n \qquad \textrm{with} \qquad \alpha(x) = (\mbf{K}+\lambda I)^{-1}~v(x),
}
where $\mbf{K}\in\R^{n \times n}$ is the empirical kernel matrix with entries $K_{ij} = k(x_i,x_j)$ and $v(x)\in\R^n$ is the evaluation vector with entries $v(x)_i = k(x,x_i)$, for any $i,j=1,\dots,n$ and $\lambda>0$ is a hyperparameter. 

The estimator above has a similar form to the \spml{} algorithm proposed in this work in \cref{eq:estimator}. In the following, we show that the latter is indeed a special case of \cref{eq:original-sp-estimator}.

\subsection{A Strucutred Prediction perspective on Conditional Meta-learning}
\label{app:connection-sp-metalearning}

In the conditional meta-learning setting introduced in \cref{sec:conditional-meta-learning} the goal is to learn a function $\tau:\D\to\Theta$ where $\D$ is a space of datasets and $\Theta$ a space of learning algorithms. We define the conditional meta-learning problem according to the expected risk \cref{eq:conditional-meta-learning-risk} as
\eqal{\label{eq:conditional-meta-learning-problem}
    \min_{\tau:\D\to\Theta}~\E(\tau) \qquad \textrm{with} \qquad \E(\condmetapar) = \int ~\Lagr\Big(\alg\big(\tau(D^{tr}),D^{tr}\big),~D^{val}~\Big)~d\pi(D^{tr},D^{val}),
}
where $\pi$ is a probability distribution sampling the pair of train and validation datasets $D^{tr}$ and $D^{val}$. We recall that the distribution $\pi$ samples the two datasets according to the process described in \cref{sec:bg}, namely by first sampling $\rho$ a task-distribution (on $\X\times\Y$) from $\mu$ and then obtaining $D^{tr}$ and $D^{val}$ by independently sampling points $(x,y)$ from $\rho$. Therfore $\pi = \pi_\mu$ can be seen as implicitly induced by $\mu$. In practice, we have only access to a meta-training set $\metaD = (D_i^{tr},D_i^{val})_{i=1}^N$ of train-validation pairs sampled from $\pi$. 

We are ready to formulate the conditional meta-learning problem within the structured prediction setting introduced in \cref{app:general-structured-prediction}. In particular, we take the input and label spaces to correspond to the set $\D$ and choose as output set the space $\Theta$ of meta-parameters. In this setting, the loss function is of the form $\sploss:
\Theta\times\D\times\D\to\R$ and corresponds to 
\eqal{\label{eq:loss-equivalence}
    \sploss(\theta,D^{val} | D^{tr}) ~=~ \Lagr\Big(\alg\big(\theta,D^{tr}\big),~D^{val}~\Big).
}
Therefore, we can interpret the loss $\sploss$ as the function measuring the performance of a meta-parameter $\theta$ when the corresponding algorithm $\alg(\theta,\cdot)$ is trained on $D^{tr}$ and then tested on $D^{val}$. Under this notation, it follows that \cref{eq:conditional-meta-learning-problem} is a special case of the structured predition problem \cref{eq:struct-pred-risk}. Therefore, casting the general structured prediction estimator \cref{eq:original-sp-estimator} within this setting yields the \spml{} estimator proposed in this work and introduced in \cref{eq:estimator}, namely  $\condmetapar_N:\D\to\Theta$ such that, for any dataset $D\in\D$,
\eqals{
    \condmetapar_N(D) ~=~ \argmin_{\theta\in\Theta}~ \sum_{i=1}^N~ \alpha_i(D)~\Lagr\Big(\alg\big(\theta,D^{tr}\big),~D^{val}~\Big),
}
where $\alpha:\D\to\R^N$ is learned according to \cref{eq:origina-sp-alphas}, namely 
\eqals{
 \alpha(x) = (\alpha_1(x),\dots,\alpha_N(x))^\top \in \R^N \qquad \textrm{with} \qquad \alpha(x) = (\mbf{K}+\lambda I)^{-1}~v(D),
}
with $\mbf{K}$ and $v(D)$ defined as in \cref{eq:estimator}. Hence, we have recovered $\condmetapar_N$ as it was introduced in this work. 

\subsection{Theoretical Analysis}

In this section we prove \cref{thm:rates}. Our result can be seen as a corollary of \cite[Thm.5]{ciliberto2020general} applied to the generalized structured prediction setting of \cref{app:general-structured-prediction}. The result hinges on two regularity assumptions on the loss $\sploss$ and on the meta-distribution $\pi$ that we introduce below.

\begin{assumption}\label{asm:loss-regularity}
The loss $\sploss$ is of the form \cref{eq:loss-equivalence} and admits derivatives of any order, namely $\sploss\in C^{\infty}(\Z\times\Y\times\X)$. 
\end{assumption}

Recall that by \cref{eq:loss-equivalence} we have 
\eqal{
    \Lagr(\theta,D^{val},D^{tr}) = \frac{1}{|D^{val}|}~\sum_{(x,y)\in D^{val}}~ \ell\Big(~\big[\alg(\theta,D^{tr})\big](x),~y\Big).
}
Therefore, sufficient conditions for \cref{asm:loss-regularity} to hold are: $i)$ the inner loss function $\ell$ is smooth (e.g. least-squares, as in this work) and $ii)$ the inner algorithm $\alg(\cdot,\cdot)$ is smooth both with respect to the meta-parameters $\theta$ and the training dataset $D^{tr}$. For instance, in this work, \cref{asm:loss-regularity} is verified if the meta-representation network $\psi_\theta$ is smooth with respect to the meta-parametrization $\theta$. Indeed, $\ell$ is chosen to be the least-squares loss and the closed form solution $W(\theta,D^{tr}) = X_\theta^\top(X_\theta X_\theta^\top + \lambda I)^{-1}Y$ in \cref{eq:ls-closed-form} is smooth for any $\lambda>0$.

The second assumption below concerns the regularity properties of the meta-distribution $\pi$ and its interaction with the loss $\sploss$. The assumption leverages the notion of Sobolev spaces. We recall that for a set $\mathcal{K}\subset\R^{d}$ the Sobolev space $W^{s,2}(\mathcal{K})$ is the Hilbert space of functions from $\mathcal K$ to $\R$ that have square integrable weak derivatives up to the order $s$. We recall that if $\mathcal{K}$ satisfies the cone condition, namely there exists a finite cone $C$ such that each $x\in\mathcal{K}$ is the vertex of a cone $C_x$ contained in $\mathcal{K}$ and congruent to $C$ \cite[Def. $4.6$]{adams2003sobolev}, then for any $s>d/2$ the space $W^{s,2}(\mathcal{K})$ is a RKHS. This follows from the Sobolev embedding theorem \cite[Thm. 4.12]{adams2003sobolev} and the properties of RKHS (see e.g. \cite{berlinet2011reproducing} for a detailed proof).

Given two Hilbert spaces $\hh$ and $\F$, we denote by $\hh\otimes\F$ the tensor product of $\hh$ and $\F$. In particular, given two basis $(h)_{i\in\N}$ and $(f_j)_{j\in\N}$ for $\hh$ and $\F$ respectively, we have 
\eqals{
    \scal{ h_i\otimes f_j}{h_{i'}\otimes f_{j'}}_{\hh\otimes\F} = \scal{h_i}{h_{i'}}_\hh \cdot \scal{f_j}{f_{j'}}_\F,
}
for every $i,i',j,j'\in\N$. We recall that $\hh\otimes\F$ is a Hilbert space and it is isometric to the space $\hs(\F,\hh)$ of Hilbert-Schmidt (linear) operators from $\F$ to $\hh$ equipped with the standard Hilbert-Schmidt $\scal{\cdot}{\cdot}_\hs$ dot product. In the following, we denote by $\msf{T}:\hh\otimes\F \to \hs(\F,\hh)$ the isometry between the two spaces. 

We are ready to state our second assumption.

\begin{assumption}\label{asm:regularity-conditional-mean-embedding}
Assume $\Theta\subset\R^{d_1}$ and $\D\subset\R^{d_2}$ compact sets satisfying the cone condition and assume that there exists a reproducing kernel $k:\D\times\D\to\R$ with associated RKHS $\F$ and $s>(d_1 + 2d_2)/2$ such that the function $\gstar:\D\to\hh$ with $\hh = W^{s,2}(\Theta\times\D)$, characterized by 
\eqal{\label{eq:regular-gstar}
    \gstar(D^{tr}) = \int \sploss(\cdot,D^{val} |~ \cdot)~d\pi(D^{val}|D^{tr}) \qquad\qquad \forall D^{tr}\in\D,
}
is such that $\gstar\in \hh\otimes\F$ and, for any $D\in\D$, we have that the application of the operator $\msf{T}(\gstar):\F\to\hh$ to the function $k(D,\cdot)\in\F$ is such that $\msf{T}(\gstar) ~ k(D,\cdot) = \gstar(D)$. 
\end{assumption}

The function $\gstar$ in \cref{eq:regular-gstar} can be interpreted as capturing the interaction between $\sploss$ and the meta-distribution $\pi$. In particular, \cref{asm:regularity-conditional-mean-embedding} imposes two main requirements: $i)$ for any $D\in\D$ the output of $\gstar$ is a vector in a Sobolev space (i.e. a function) of smoothness $s> (d_1 + 2d_2)/2$, namely $\gstar(D) \in W^{s,2}(\Theta\times\D)$ and, $ii)$ we require $\gstar$ to correspond to a vector in $W^{s,2}(\Theta\times\D)\otimes\F$. Note that the first requirement is always satisfied if \cref{asm:loss-regularity} holds. The second assumption is standard in statistical learning theory (see \cite{shalev2014understanding,caponnetto2007} and references therein) and can be interpreted as requiring the conditional probability $\pi(\cdot|D^{tr})$ to not vary dramatically for small perturbations of $D^{tr}$. 

We are ready to state and prove our main theorem, whose informal version is reported in \cref{thm:rates-informal} in the main text.

\begin{restatable}[Learning Rates]{theorem}{TRates}\label{thm:rates}
Under \cref{asm:loss-regularity,asm:regularity-conditional-mean-embedding}, let $\metaD = (D^{tr}_i,D^{val}_i)_{i=1}^N$ be a meta-training set of points independently sampled from a meta-distribution $\pi$. Let  $\condmetapar_N$ be the estimator in \cref{eq:estimator} trained with $\lambda_2 = N^{-1/2}$ on $\metaD$. Then, for any $\delta\in(0,1]$ the following holds with probability larger or equal than $1-\delta$,
\eqal{
    \E(\condmetapar_N) ~- \inf_{\condmetapar:\D\to\Theta}~\E(\condmetapar)~\leq ~c \log(1/\delta)~ N^{-1/4},
}
where $c$ is a constant depending on $\kappa^2 = \sup_{D\in\D}k(D,D)$ and $\nor{\gstar}_{\hh\otimes\F}$ but independent of $N$ and $\delta$.
\end{restatable}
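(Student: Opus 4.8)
The plan is to derive \cref{thm:rates} as a corollary of the general learning-rate result for structured prediction with structure-encoding loss functions, namely \cite[Thm.~5]{ciliberto2020general}, instantiated in the generalized setting of \cref{app:general-structured-prediction}. First I would recall the precise correspondence established in \cref{app:connection-sp-metalearning}: we set the input and label spaces both to $\D$, the output space to $\Theta$, the sampling distribution to $\pi = \pi_\mu$, and the conditioned loss to $\sploss(\theta, D^{val}\mid D^{tr}) = \Lagr(\alg(\theta, D^{tr}), D^{val})$ as in \cref{eq:loss-equivalence}. Under this dictionary, the estimator $\condmetapar_N$ of \cref{eq:estimator} is exactly the structured prediction estimator \cref{eq:original-sp-estimator} with the weights $\alpha(D) = (\mbf{K}+\lambda I)^{-1} v(D)$ of \cref{eq:origina-sp-alphas}. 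Thus the excess risk $\E(\condmetapar_N) - \inf_\condmetapar \E(\condmetapar)$ in the meta-learning problem coincides verbatim with the excess risk controlled by the general theorem.

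Second, I would check that the two hypotheses of the general theorem are met. The structure-encoding-loss framework requires (i) that $\sploss$ be a ``loss of the encoding type'' — i.e. that there exist a Hilbert space $\hh$, a feature map $\psi:\Y \to \hh$ and a map $V:\Z\times\X \to \hh$ (or more simply a continuous symmetric structure on $\sploss$) such that $\sploss(z,y\mid x) = \scal{V(z,x)}{\psi(y)}_\hh$, together with a smoothness/boundedness requirement; and (ii) a source condition on the conditional mean embedding $\gstar$ of the loss with respect to the marginal. \cref{asm:loss-regularity} — that $\sploss\in C^\infty(\Z\times\Y\times\X)$ on compact domains — furnishes (i): smoothness of $\sploss$ on a compact product set lets one take $\hh = W^{s,2}(\Theta\times\D)$ and realize $\sploss$ inside the RKHS norm, because a Sobolev space of high enough order is an RKHS (via the Sobolev embedding theorem, \cite[Thm.~4.12]{adams2003sobolev}, once the cone condition holds as assumed in \cref{asm:regularity-conditional-mean-embedding}). \cref{asm:regularity-conditional-mean-embedding} is precisely the statement that $\gstar \in \hh\otimes\F$ and that the associated Hilbert–Schmidt operator $\msf{T}(\gstar)$ reproduces $\gstar(D)$ when applied to $k(D,\cdot)$ — this is the ``$\gstar$ in the range of the (square root of the) covariance'' source condition, which is exactly hypothesis (ii) at the mildest regularity level. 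So both conditions of \cite[Thm.~5]{ciliberto2020general} hold.

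Third, I would invoke the general theorem to obtain the comparison inequality. The structure-encoding-loss machinery gives a two-step bound: the excess risk of the estimator is controlled by the $\hh$-norm distance between the finite-sample conditional mean embedding estimate $\hat g$ (the kernel ridge regression estimate of $\gstar$ built on $\metaD$ with regularizer $\lambda$) and the true $\gstar$, via a ``comparison inequality'' $\E(\condmetapar_N) - \E^* \le c_\sploss \, \EE_D\|\hat g(D) - \gstar(D)\|_\hh$ (or its square-root analogue), where $c_\sploss$ depends only on the loss and the domain. Then the conditional-mean-embedding regression rate — a standard result in vector-valued kernel ridge regression under the source condition of \cref{asm:regularity-conditional-mean-embedding} — yields, with the choice $\lambda = \lambda_2 = N^{-1/2}$, a high-probability bound of order $\lambda^{1/2} + \lambda^{-1/2}N^{-1/2} \asymp N^{-1/4}$, with a $\log(1/\delta)$ confidence factor and a constant depending on $\kappa^2 = \sup_D k(D,D)$ and $\nor{\gstar}_{\hh\otimes\F}$. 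Combining the comparison inequality with this regression rate gives the claimed $c\log(1/\delta)\,N^{-1/4}$ bound.

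The main obstacle is the first hypothesis-checking step rather than the citation itself: one must verify carefully that casting conditional meta-learning into the \emph{conditioned}-loss structured prediction framework of \cite{ciliberto2019localized, ciliberto2020general} — where the loss $\sploss(z,y\mid x)$ depends on the input $x$, here $x = D^{tr}$ appearing both as the conditioning variable and, through $\alg(\theta, D^{tr})$, inside the loss — still fits the hypotheses of the general theorem, in particular that the relevant feature map / structure on $\sploss$ is measurable and bounded on the compact domain and that the conditional mean embedding $\gstar$ is well-defined as a $\hh\otimes\F$ element. This is where \cref{asm:loss-regularity} (smoothness of $\alg$ and $\ell$, e.g. the closed-form least-squares solver of \cref{eq:ls-closed-form}) and the cone-condition/compactness hypotheses of \cref{asm:regularity-conditional-mean-embedding} do the real work; once these are in place, the rest is a direct application of the quoted theorem together with the standard KRR rate, and I would only sketch those routine parts.
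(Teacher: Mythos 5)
Your proposal is correct and follows essentially the same route as the paper: cast conditional meta-learning as structured prediction with the conditioned loss \cref{eq:loss-equivalence}, use \cref{asm:loss-regularity} together with the Sobolev/RKHS embedding to obtain the SELF representation $\sploss(\theta,D^{val}|D^{tr}) = \scal{\psi(\theta,D^{tr})}{V\varphi(D^{val})}_\hh$ (the paper does this via \cite[Thm.~6]{luise2018differential}), identify $\condmetapar_N$ with the decoding of the vector-valued KRR estimate $g_N$ of $\gstar$, and combine the comparison inequality \cite[Thm.~9]{ciliberto2019localized} with the KRR rate of \cite[Thm.~5]{ciliberto2020general} under the source condition of \cref{asm:regularity-conditional-mean-embedding} and $\lambda_2 = N^{-1/2}$. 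The only step you gloss over that the paper makes explicit is the linearity argument showing $\condmetapar_{g_N}(D) = \argmin_\theta \sum_i \alpha_i(D)\Lagr(\alg(\theta,D_i^{tr}),D_i^{val})$ coincides with the practical estimator \cref{eq:estimator}, but this is routine and correctly implied by your setup.
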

    
\begin{proof}
Let $\hh = W^{s,2}(\Theta\times\D)$ and $\G = W^{s,2}(\D)$. Since $s>(d_1+2d_2)/2$, both $\G$ and $\hh$ are reproducing kernel Hilbert spaces (RKHS) (see discussion above or \cite{berlinet2011reproducing}). Let $\psi:
\Theta\times\D\to\hh$ and $\varphi:\D\to\G$ be two feature maps associated to $\hh$ and $\G$ respectively. Without loss of generality, we can assume the two maps to be normalized.

We are in the hypotheses\footnote{the original theorem was applied to the case where $\Z\times\X=\Y$ was the probability simplex in finite dimension. However the proof of such result requires only that $\hh$ and $\G$ are RKHS and can therefore be applied to the general case where $\Z\times\X$ and $\Y$ are different from each other and they do not correspond to the probability simplex but are rather subset of $\R^k$ (possibly with different dimension for each space) and satisfy the boundary condition \cite{berlinet2011reproducing}. Therefore in our setting we can take $\Z = \Theta$ and $\X = \Y = \D$ to obtain the desired result.} of \cite[Thm. $6$ Appendix D]{luise2018differential}, which guarantees the existence of a Hilbert-Schmidt operator $V:\G\to\hh$, such that $\sploss$ can be characterized as 
\eqal{\label{eq:metal-loss-is-self}
    \sploss(\theta,D^{val} | ~ D^{tr}) ~=~ \scal{\psi(\Theta,D^{tr})}{V\varphi(D^{val})}_\hh
}
for any $D^{tr},D^{val}\in\D$ and $\theta\in\Theta$. Since the feature maps $\varphi$ and $\psi$ are normalized \cite{berlinet2011reproducing}, this implies also $\nor{V}_{\hs} = \nor{\sploss}_{s,2} < +\infty$, namely that the Sobolev norm of $\sploss$ in $W^{s,2}(\Theta\times\D\times\D)$ is equal to the Hilbert-Schmidt norm of $V$.

The result in \cref{eq:metal-loss-is-self} corresponds to the definition of {\itshape Structure Encoding Loss Function (SELF)} in \cite[Def. $1$]{ciliberto2019localized}. Additionally, if we denote $\widetilde\varphi = V\varphi$, we obtain the equality
\eqal{\label{eq:equivalent-assumption}
\gstar(D^{tr}) ~=~ 
    \int \widetilde\varphi(D^{val})~d\pi(D^{val}|D^{tr}) ~=~ \int \sploss(\cdot,D^{val}|\cdot)~d\pi(D^{val}|D^{tr}),
}
for all $D^{tr}\in\D$, where $\gstar:\D\to\hh$ is defined as in \cref{asm:regularity-conditional-mean-embedding}, we are in the hypotheses of the comparison inequality theorem \cite[Thm. $9$]{ciliberto2019localized}. In our setting, this result states that for any measurable function $g:\D\to\hh$ and the corresponding function $\condmetapar_g:\D\to\Theta$ defined as 
\eqal{
	\condmetapar_g(D) = \argmin_{\theta\in\Theta}~ \scal{\psi(\theta,D)}{g(D)}_\hh \qquad \forall D\in\D, 
}
we have 
\eqal{\label{eq:comparison-inequality}
	\E(\condmetapar_g) - \inf_{\condmetapar:\D\to\Theta}\E(
	\condmetapar)~\leq~ \sqrt{\int \nor{g(D) - \gstar(D)}_\hh^2 ~ d\pi_{\D}(D)},
}
where $\pi_\D(D^{tr})$ denotes the marginal of $\pi(D^{val},D^{tr})$ with respect to training data. Note that the constant $\msf{c}_\sploss$ that appears in the original comparison inequality is upper bounded by $1$ in our setting since $\msf{c}_\sploss = \sup_{D,\theta}\nor{\psi(\theta,D)}$ and the feature map $\psi$ is normalized. 

Let now $g_N:\D\to\Theta$ be the minimizer of the vector-valued least-squares empirical risk minimization problem
\eqals{
	g_N = \argmin_{g\in\hh\otimes\F}~ \frac{1}{N}~ \sum_{i=1}^N~ \nor{g(D_i^{tr}) - \tilde\varphi(D_i^{val})}_\hh^2 + \lambda_2 \nor{g}_{\hh\otimes\F}^2. 	
}
This problem can be solved in closed form and it can be shown \cite[Lemma B.$4$]{ciliberto2020general} that $g_N$ is of the form 
\eqal{
	g_N(D) = \sum_{i=1}^n \alpha_i(D)~ \tilde\varphi(D_i^{val}),
}
for all $D\in\D$, where $\alpha_i(D)$ is defined as in \cref{eq:estimator}. Due to linearity (see also Lemma $8$ in \cite{ciliberto2019localized}) we have
\eqal{
	\condmetapar_{g_N}(D) & = \argmin_{\theta\in\Theta}~ \scal{\psi(\theta,D)}{g_N(D)}_\hh \\
	& = \argmin_{\theta\in\Theta}~ \sum_{i=1}^N~ \alpha_i(D)~\Lagr\Big(\alg\big(\theta,D^{tr}\big),~D^{val}~\Big) \\
	& = \condmetapar_N(D),
}
which corresponds to the estimator $\condmetapar_N(D)$ studied in this work and introduced in \cref{eq:estimator}. The comparison inequality \cref{eq:comparison-inequality} above, becomes 
\eqal{
\E(\condmetapar_N) - \inf_{\condmetapar:\D\to\Theta}\E(f)~\leq~ \sqrt{\int \nor{g_N(D) - \gstar(D)}_\hh^2 ~ d\pi_{\D}(D)}.
}
Therefore, we can obtain a learning rate for the excess risk of $\condmetapar_N$ by studying how well the vector-valued least-squares estimator $g_N$ is approximating $\gstar$. Since $\gstar\in\hh\otimes\F$ from the hypothesis, we can replicate the proof in \cite[Thm. $5$]{ciliberto2020general} to obtain the desired result. Note that by framing our problem in such context we obtain a constant $c$ that depends only on the norm of $\gstar$ as a vector in $\hh\otimes\F$. We recall that $\gstar$ captures the ``regularity'' of the meta-learning problem. Therefore, the more regular (i.e. easier) the learning problem, the faster the learning rate of the proposed estimator. 
\end{proof}

\section{Model and Experiment Details}
\label{app:details-and-experiments}
We provide additional details on the model architecture, experiment setups, and hyperparameter choices. We performed only limited mode tuning, as it is not the focus on the work.

\subsection{Details on \lsmetal{}}
\label{app:ls-meta}
Meta-representation learning methods formulate meta-learning as the process of finding a shared representation to be fine-tuned for each task. Formally, they model the task predictor as a composite function $f_W \circ \psi_\theta:\X\to\Y$, with $\psi_\theta:\X\to\R^p$ a shared feature extractor meta-parametrized by $\theta$, and $f_W:\R^p\to\Y$ a map parametrized by $W$. The parameters $W$ are learned for each task as a function $W(\theta,D)$ via the inner algorithm
\eqal{\label{eq:meta-representation-model}
f_{W(\theta,D)}\circ\psi_\theta ~=~ \alg(\theta,D).
}

\cite{bertinetto2018meta} proposed $\alg(\theta,D)$ to perform empirical risk minimization of $f_W$ over $D = (x_i,y_i)_{i=1}^m$ with respect to the least-squares loss $
\ell(y,y') = \nor{y - y'}^2$. Assuming\footnote{For instance, $C$ is the total number of classes, and $y\in\Y$ the one-hot encoding of a class in classification tasks} $\Y=\R^C$ and a linear model for $f_W$, this corresponds to performing ridge-regression on the features $\psi_\theta$, yielding the closed-form solution
\eqal{\label{eq:ls-closed-form}
    W(\theta,D) ~=~ X_\theta^\top (X_\theta X_\theta^\top + \lambda_{\theta} I)^{-1} Y,
}
where $\lambda_1>0$ is a regularizer. $X_\theta\in\R^{m\times p}$ and $Y\in\R^{m\times C}$ are matrices with $i$-th row corresponding to the $i$-th training input $\psi_\theta(x_i)$ and output $y_i$ in the dataset $D$, respectively. The closed-form solution \cref{eq:ls-closed-form} has the advantage of being $i)$ efficient to compute and $ii)$ suited for the computation of meta-gradients with respect to $\theta$. Indeed, $\nabla_\theta W(\theta,D)$ can be computed in closed-form or via automatic differentiation.

\lsmetal{} is a meta-representation learning algorithm consists of:
\begin{itemize}[topsep=0em,itemsep=0em,partopsep=0em]
    \item The {\itshape meta-representation} architecture $\psi_\theta:\X\to\R^p$ is a two-layer fully-connected network with residual connection~\cite{he2016deep}.

    \item The {\itshape task predictor} $f_W:\R^{p}\to\Y$ is a linear model  $f_W\big(\psi_\theta(x)\big) = W\psi_\theta(x)$ with  $W\in\R^{C\times p}$ the model parameters. We assume $\Y=\R^C$ (e.g. one-hot encoding of $C$ classes in classification settings).
    
    \item The {\itshape inner algorithm} is $f_{W(\theta,D)}\circ\psi_\theta = \alg(\theta,D)$, where $W(\theta,D)$ is the least-squares closed-form solution introduced in \cref{eq:ls-closed-form}.
    
\end{itemize}
We note that \cite{bertinetto2018meta} uses the cross-entropy $\ell$ to induce $\Lagr$. Consequently, when optimizing the meta-parameters $\theta$, the performance of $W(\theta, D)$ is measured on a validation set $D'$ with respect to a loss function (cross-entropy) different from the one used to learn it (least-squares). We observe that such incoherence between inner- and meta-problems lead to worse performance than least-square task loss.

\subsection{Model Architecture}
Given the pre-trained representation $\varphi(x)\in\R^{640}$, the proposed model is $f_{\theta}(\varphi(x))= \varphi(x) + g_{\theta}(\varphi(x))$, a residual network with fully-connected layers. Each layer of the fully-connected network $g_{\theta}(\varphi(x))$ is also 640 in dimension.

We added a $\ell_2$ regularization term on $\theta$, with a weight of $\lambda_\theta$ reported below.

For top-$M$ values from $\alpha(D)$, we normalize the values such that they sum to 1.

\subsection{Experiment Setups}
We use the same experiment setup as \textsc{LEO}~\cite{rusu2018meta} by adapting its official implementation\footnote{\url{https://github.com/deepmind/leo}}. For both $5$-way-$1$-shot and $5$-way-$5$-shot settings, we use the default environment values from the implementation, including a meta-batch size of 12, and 15 examples per class for each class in $D^{val}$ to ensure a fair comparison.

\subsection{Model Hyperparameters}
Models across all settings share the same hyperparameters, listed in \cref{tab:hyper}.

\begin{table}[t]
%\vskip 0.15in
\caption{Hyperparameter values used in the experiments}
\begin{center}
\begin{small}
\begin{sc}
\begin{tabular}{llc}
\toprule
Symbol & Description & Values\\
\midrule
$\lambda$ in \cref{eq:estimator} & regularizer for learning $\alpha(D)$ & $10^{-8}$\\
$\lambda_{\theta}$ in \cref{eq:ls-closed-form} & regularizer for the least-square solver,  & $0.1$\\
$\sigma$ in \cref{eq:dataset-signature} & kernel bandwidth & $50$\\
$\eta$ & meta learning rate & $10^{-4}$\\
$N$ & total number of meta-training tasks & $30,000$\\
$M$ & number of tasks to keep in \cref{alg:spml} & $500$\\
\end{tabular}
\end{sc}
\end{small}
\end{center}
%\vskip -0.1in
\label{tab:hyper}
\end{table}

\section{Additional Ablation Study}\label{sec:additional-ablation}
\subsection{Structured Prediction from Random Initialization}
\label{app:no-init}
We note that the unconditional initialization of $\theta$ from \cref{sec:practical} is optional and designed for improving computational efficiency. \cref{fig:sp_no_init} reports how \spml{} performs, starting from random initialization. The results suggest that structured prediction takes longer to converge with random initialization, but achieves performance comparable to \cref{tab:comp}. In addition, structured prediction appears to work well, despite having access to only a small percentage of meta-training tasks ($M=1000$ in this experiment).

\begin{figure}[t]
\centering
\includegraphics[trim={0 0 0 1.2cm},clip,width=0.5\textwidth]{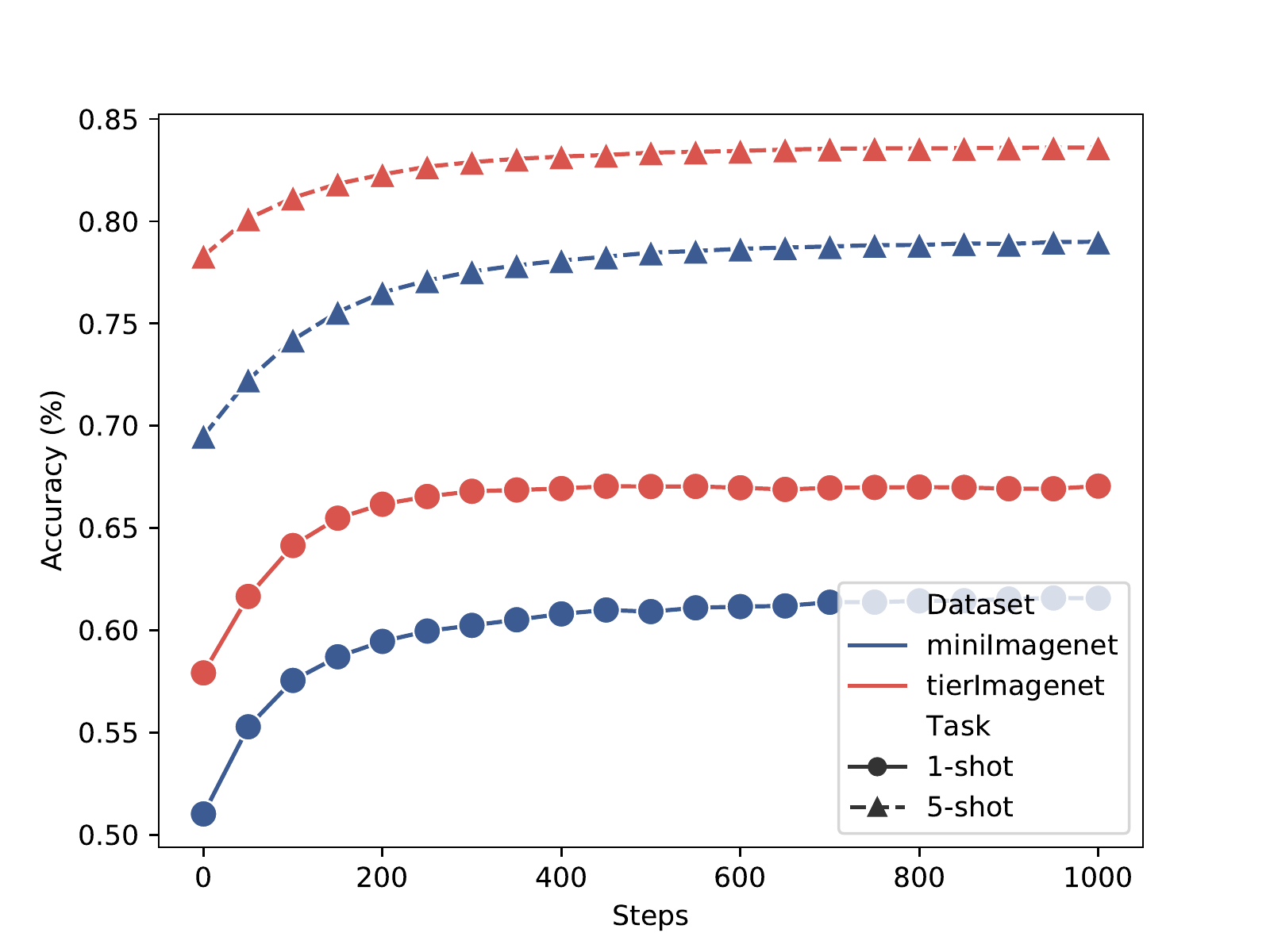}
\caption{Average test task performance over 1000 structured prediction steps. Structured prediction takes longer to converge from random initialization, but achieves comparable performance.}
\label{fig:sp_no_init}
\end{figure}

\subsection{Top-M Filtering}
\label{app:sp_top_m}
\cref{tab:sp_m_filter} reports the performance of structured prediction by varying the number $M$ of tasks used. We use $5$-way-$5$-shot on \mimg{} and \timg{} as the experiment settings.

\begin{table}[t]
    \centering
    \setlength{\tabcolsep}{4pt}
    \caption{The effects of top-$M$ filtering on structured prediction accuracy. \spml{} is robust to the choices of $M$.}
    \label{tab:sp_m_filter}
    \begin{tabular}{cccccc}
    $M$ & 100 & 500 & 1000 & 10000 & 30000\\
    \midrule
    \mimg{} (\%) & $77.60 \pm 0.30$ & $78.22 \pm 0.47$ & $78.43 \pm 0.39$ & $78.47 \pm 0.37$ & $78.51 \pm 0.42$\\
    \timg{} (\%) & $81.95 \pm 0.23$ & $82.62 \pm 0.31$ & $82.95 \pm 0.27$ & $83.01 \pm 0.29$ & $83.03 \pm 0.35$\\
    \end{tabular}
    % \vskip 0.3cm
\end{table}
% \vskip -0.3cm

The results show that \spml{} is robust to the choice of $M$. As $M$ increases, its impact on performance is small as most tasks have tiny weights with respect to the objective function.

\subsection{Explicit Dependence on Target Tasks}
\label{app:estimator_ablation}
In \cref{eq:estimator-improved}, we introduced an additional $\Lagr\big(\alg(\theta, D), D\big)$, such that the objective function explicitly depends on target task $D$. To study the contribution of each term towards test accuracy, we modify \cref{eq:estimator-improved} by weighting the contribution of each term by the weights $\beta_1,\beta_2\geq0$
\eqal{\label{eq:estimator-improved-lam}
    \condmetapar(D) = \argmin_{\theta \in \Theta} ~ \beta_1\sum_{i=1}^N \alpha_i(&D) ~ \Lagr\big(\alg(\theta, D^{tr}_i),~ D^{val}_i\big) + \beta_{2} \Lagr\big(\alg(\theta, D),~ D\big),
}
We report the test accuracy on \mimg{} and \timg{} on $5$-way-$5$-shot below.

\begin{table}[t]
    \centering
    \begin{tabular}{cccccc}
     & $\beta_{1}=0,\beta_2=1$ & $\beta_{1}=1,\beta_2=0$ & $\beta_{1}=1,\beta_2=1$ & $\beta_{1}=1,\beta_2=2$\\
    \midrule
    \mimg{} (\%) & $73.59 \pm 0.49$ & $77.32 \pm 0.36$ & $78.22 \pm 0.47$ & $78.51 \pm 0.32$\\
    \timg{} (\%) & $79.74 \pm 0.62$ & $81.63 \pm 0.47$ & $82.62 \pm 0.31$ & $83.01 \pm 0.43$
    \end{tabular}
    % \vskip 0.2cm
    \caption{Test accuracy on \mimg{} and \timg{} by adjusting the importance of each term in \cref{eq:estimator-improved-lam}}
    \label{tab:sp_special}
\end{table}
% \vskip -0.3cm

\cref{tab:sp_special} suggests that the explicit dependence on target task $D$, \textbf{combined with} other relevant tasks, provides the best training signal for \spml{}. In particular, optimizing with respect to the target task alone (i.e. $\beta_{1}=0,\beta_2=1$) leads to overfitting while excluding the target task (i.e. $\beta_{1}=1,\beta_2=0$) ignores valuable training signal, leading to underfitting. Ultimately, both extremes lead to sub-optimal performance. The results in \cref{tab:sp_special} show that both terms in \cref{eq:estimator-improved} are necessary to achieve good test accuracy.

\subsection{Choice of Kernel for Structured Prediction}
\label{app:sp_kernel}
Lastly, we study how the choice of kernel from \cref{eq:dataset-signature} affects test accuracy. In addition to the Gaussian kernel considered in this work, we include both the linear kernel
\eqals{
    k(D,D') = \scal{\Phi(D)}{\Phi(D')} + c
}
with $c>0$ a hyperparameter, and the Laplace kernel
\eqals{
    k(D,D') = \exp(-\nor{\Phi(D) - \Phi(D')}/\sigma)
}
with $\sigma>0$ a hyperparameter. We considered the $5$-way-$5$-shot task on \mimg{} and \timg{} to compare the impact of the kernel on \spml{}.

\begin{figure}[t]
     \subfloat[\mimg{}]{%
      \includegraphics[trim={0 0 0 1.2cm},clip,width=0.45\textwidth]{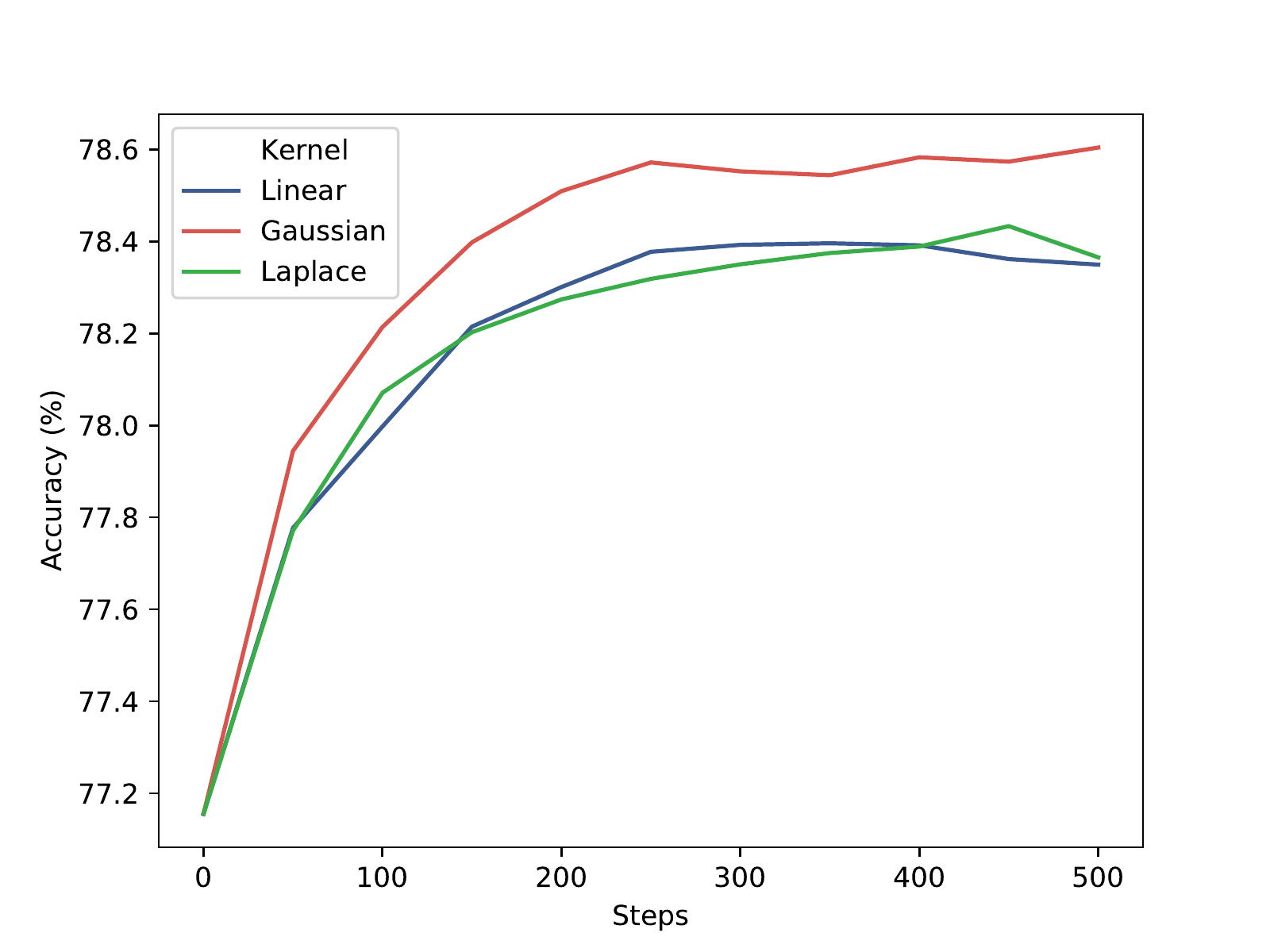}
     }
     \hfill
     \subfloat[\timg{}]{%
      \includegraphics[trim={0 0 0 1.2cm},clip,width=0.45\textwidth]{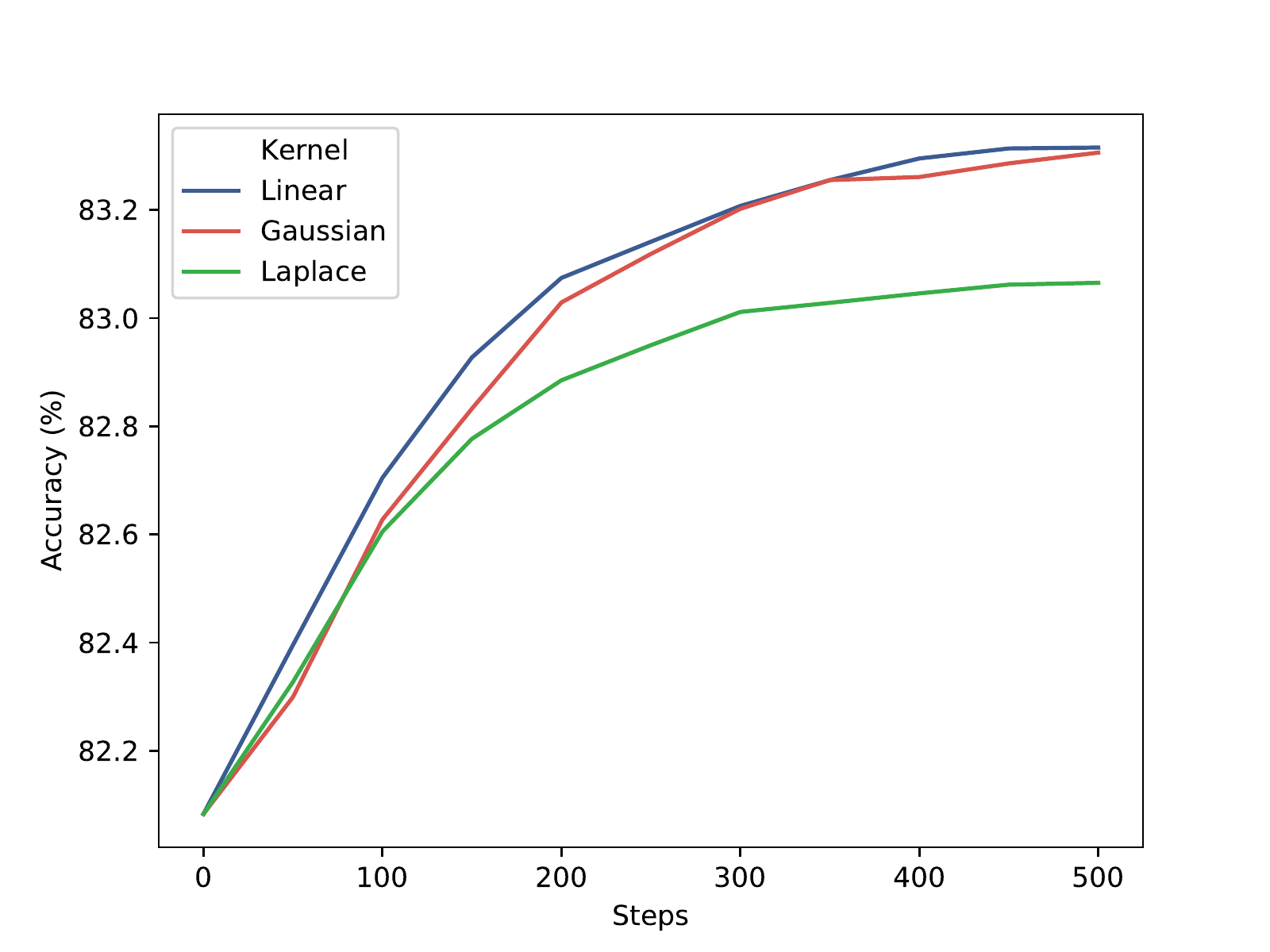}
     }
\caption{Test performance with different choices of kernel for structured prediction. Gaussian kernel obtains the best performance}
\label{fig:sp_comp_kernel}
\end{figure}

\cref{fig:sp_comp_kernel} shows that the Gaussian kernel overall obtains the best performance among the three candidates. In our experiments, we observed that Gaussian kernels are most robust with respect to bandwidth parameters, while Laplace kernels appeared sensitive to the bandwidth parameter. Careful model selection for the bandwidth parameter might lead to better or comparable performance, but it is beyond the scope of this work. In addition, we observed the linear kernel to perform well in some settings but less expressive in general.

\end{document}